\def\eqref#1{equation~\ref{#1}}
\def\1{\bm{1}}
\DeclareMathAlphabet{\mathsfit}{\encodingdefault}{\sfdefault}{m}{sl}
\SetMathAlphabet{\mathsfit}{bold}{\encodingdefault}{\sfdefault}{bx}{n}
\newcommand{\softmax}{\mathrm{softmax}}
\def\BibTeX{{\rm B\kern-.05em{\sc i\kern-.025em b}\kern-.08em
    T\kern-.1667em\lower.7ex\hbox{E}\kern-.125emX}}
\definecolor{theoremcolor}{rgb}{0.97, 0.97, 0.97}
\definecolor{examplecolor}{rgb}{1, 1, 1.0}
\newmdtheoremenv[linewidth=0pt,innerleftmargin=4pt,innerrightmargin=4pt]{definition}{Definition}
\newmdtheoremenv[linewidth=0pt,innerleftmargin=4pt,innerrightmargin=4pt]{proposition}{Proposition}
\newmdtheoremenv[linewidth=0pt,innerleftmargin=0pt,innerrightmargin=0pt,backgroundcolor=examplecolor]{example}{Example}
\newmdtheoremenv{corollary}{Corollary}
\newmdtheoremenv{theorem}{Theorem}
\newmdtheoremenv{lemma}{Lemma}
\newmdtheoremenv{remark}{Remark}
\newcommand{\boldres}[1]{{\textbf{\textcolor{red}{#1}}}}
\newcommand{\ConMIL}{\textbf{\texttt{ConMIL}}}
\title{Smarter Together: Combining Large Language Models and Small Models for Physiological Signals Visual Inspection}
\author{
Huayu Li$^{1}$, Zhengxiao He$^{1}$, Xiwen Chen$^{2}$, Ci Zhang$^{3}$, Stuart F. Quan$^{4,5}$, William D.S. Killgore$^{6,7}$,\\ 
Shufen Wung$^{7,8}$, Chen X. Chen$^{8}$, Geng Yuan$^{3}$, Jin Lu$^{3}$, Ao Li$^{1,7}$\\[5pt]
\textit{$^{1}$Department of Electrical and Computer Engineering, University of Arizona, Tucson, AZ, USA}\\
\textit{$^{2}$School of Computing, Clemson University, Clemson, SC, USA}\\
\textit{$^{3}$School of Computing, University of Georgia, Athens, GA, USA}\\
\textit{$^{4}$Department of Medicine, University of Arizona, Tucson, AZ, USA}\\
\textit{$^{5}$Harvard Medical School and Brigham and Women’s Hospital, Boston, MA, USA}\\
\textit{$^{6}$Department of Psychiatry, University of Arizona, Tucson, AZ, USA}\\
\textit{$^{7}$BIO5 Institute, University of Arizona, Tucson, AZ, USA}\\
\textit{$^{8}$College of Nursing, University of Arizona, Tucson, AZ, USA}\\[5pt]
\textit{Corresponding author: Ao Li}\\
\textit{Department of Electrical and Computer Engineering, University of Arizona, Tucson, AZ, USA}\\
\textit{E-mail: aoli1@arizona.edu}\\
}
\date{}
\begin{document}
\maketitle


\newpage

\begin{abstract}
Large language models (LLMs) have shown promising capabilities in visually interpreting medical time-series data. However, their general-purpose design can limit domain-specific precision, and the proprietary nature of many models poses challenges for fine-tuning on specialized clinical datasets. Conversely, small specialized models (SSMs) offer strong performance on focused tasks but lack the broader reasoning needed for complex medical decision-making. \textcolor{black}{To address these complementary limitations, we introduce \ConMIL{} (Conformalized Multiple Instance Learning), a novel decision-support framework distinctively synergizes three key components: (1) a new Multiple Instance Learning (MIL) mechanism, QTrans-Pooling, designed for per-class interpretability in identifying clinically relevant physiological signal segments; (2) conformal prediction, integrated with MIL to generate calibrated, set-valued outputs with statistical reliability guarantees; and (3) a structured approach for these interpretable and uncertainty-quantified SSM outputs to enhance the visual inspection capabilities of LLMs.} Our experiments on arrhythmia detection and sleep stage classification demonstrate that \ConMIL{} can enhance the accuracy of LLMs such as ChatGPT4.0, Qwen2-VL-7B, \textcolor{black}{and MiMo-VL-7B-RL}. For example, \ConMIL{}-supported Qwen2-VL-7B \textcolor{black}{and MiMo-VL-7B-RL both achieves 94.92\% and 96.82\% precision on confident samples and (70.61\% and 78.02\%)/(78.10\% and 71.98\%) on uncertain samples for the two tasks}, compared to 46.13\% and 13.16\% using the LLM alone. These results suggest that integrating task-specific models with LLMs may offer a promising pathway toward more interpretable and trustworthy AI-driven clinical decision support.
\end{abstract}

Keywords: Physiological signals, Healthcare AI, Multimodal large language model, Clinical Decision Support

\maketitle
\newpage
\section{Introduction}\label{sec:introduction}

Physiological signals, such as electrocardiograms (ECGs) and electroencephalograms (EEGs), offer a wealth of information about a patient’s health status through sequentially sampled physiological signals. Traditionally, clinicians have relied on visual inspection to interpret these signals, identifying trends and anomalies that guide diagnoses and treatment decisions. However, this manual process is time-intensive and prone to human error, especially in complex cases or under high workload conditions. Artificial Intelligence (AI) has emerged as a powerful tool to address these challenges, enabling automated interpretation of physiological signals. For clinicians, AI systems offer efficiency and accuracy by processing vast amounts of complex data and detecting critical patterns that may elude human observers. For patients, these systems can lead to earlier disease detection and more personalized care, particularly in underserved regions.

Despite these benefits, existing AI solutions face limitations. Small specialized models (SSMs)~\cite{shickel2017deep,che2018recurrent,ismail2020inceptiontime,wang2024medformer,chen2024timemil,early2024inherently} are task-focused and small-scale architectures designed for specific tasks such as arrhythmia detection or sleep stage classification. SSMs are the mainstream of the physiological signal domain which usually directly process the raw time series data. SSMs excel in their narrow domains but lack the broader reasoning capabilities required for complex clinical decision-making. 
On the other hand, large language models (LLMs) have rapidly gained traction in healthcare~\cite{nori2023can, singhal2023towards, nakari2024sleep}. The ability of LLMs to process visual-language tasks enable them to perform physiological signal interpretation by visual inspection~\cite{gunay2024accuracy, gunay2024comparison, zhu2024multimodal,sano2024exploration,zaboli2025exploring} with reasoning processes that mimic human clinicians. LLMs hold the promise of context-aware, intuitive decision-making, but their broad scope comes with drawbacks: limited domain-specific precision, high computational costs for fine-tuning, and restricted accessibility due to proprietary weights.

These contrasting strengths and weaknesses reveal a significant opportunity: \textit{How can we combine the task-specific expertise of SSMs with the contextual reasoning capabilities of LLMs to enable robust, interpretable, and reliable visual inspection for physiological signals?} To address this, we must reconsider the role of SSMs in the physiological signal domain. Rather than functioning as standalone predictors, SSMs can be reimagined as specialized, complementary modules that enhance the reasoning capabilities of LLMs. This reframing leads to a central question: \textit{How can we effectively integrate SSMs into the LLM workflow to maximize their utility while addressing their inherent limitations?} Achieving this synergy requires a thoughtfully designed framework that aligns with clinical workflows and ensures that the outputs of SSMs are both interpretable and actionable. By embedding the insights of SSMs into the LLM decision-making process, we can enable more informed, transparent, and collaborative clinical decisions, bridging the gap between task-specific precision and broad, context-aware reasoning.

A critical challenge in multimodal clinical AI is how to effectively use the predictions of SSMs to guide large language models (LLMs) in decision-making. Ideally, SSMs should reduce the likelihood of errors made by LLMs by providing higher prediction accuracy on specialized tasks. However, in practice, SSMs may not always surpass the accuracy of LLMs, even when trained specifically on physiological signal data. Inaccurate predictions from SSMs risk misleading LLMs, thereby compromising rather than enhancing decision-making accuracy. Furthermore, when SSM and LLM predictions conflict, it can be difficult for users to determine which model to trust. Beyond merely generating predictions, SSMs should offer deeper insights into the physiological signal data, enabling LLMs to validate and contextualize their outputs. This highlights the need for an integrated approach that ensures both interpretability and reliability, empowering LLMs to make more accurate and trustworthy clinical decisions. 

To address these limitations, we need approaches that offer both interpretability and uncertainty quantification. One promising strategy is the integration of Multiple Instance Learning (MIL) and Conformal Prediction. MIL is a weakly supervised learning technique that treats physiological signal data as collections of instances, such as individual time segments. It identifies which segments have most influence on a given classification or prediction~\cite{chen2024timemil,early2024inherently}. This detailed perspective not only enhances transparency but also allows LLMs to validate predictions by tracing them back to specific, clinically meaningful intervals. However, while MIL excels at interpretability, it does not inherently quantify uncertainty. Conformal prediction addresses this gap by producing set-valued predictions at predefined confidence levels~\cite{vovk2005algorithmic},  thus enabling uncertainty estimates with statistical guarantees under mild assumptions. We create a compelling paradigm that can bolster both transparency and trustworthiness. Ultimately, this integration can transform SSMs into safer and more actionable supportive plug-ins for visual inspection of LLMs on physiological signals: a crucial feature for human-in-the-loop clinical workflows where interpretability remains non-negotiable.

We introduce \ConMIL{} (\textbf{\texttt{Con}}formalized \textbf{\texttt{M}}ultiple \textbf{\texttt{I}}nstance \textbf{\texttt{L}}earning), a supportive SSM to enhance the visual inspection capabilities of LLMs on physiological signal data. \ConMIL{} offers two main advantages: First, unlike conventional MIL methods that only handle single positive class (e.g., single diagnosis), we proposed \textit{QTrans-Pooling} mechanism. This mechanism uses learnable class tokens and cross-attention to identify the most salient data segments for each class, facilitating intuitive understanding. This is particularly beneficial in scenarios with multiple diagnoses, outcomes, or treatment options. Second, \ConMIL{} incorporates conformal prediction to deliver class-specific confidence measures by dynamically adjusting prediction thresholds based on calibration sets. This approach ensures that every reported prediction is supported by a rigorously quantified level of reliability. To our knowledge, \ConMIL{} is the first to combine conformal prediction with MIL, marking a notable advancement in healthcare AI.

Crucially, \ConMIL{} embodies a strategic shift in how SSMs enhance clinical decision-making by leveraging LLMs for visual inspection of physiological signals. By providing interpretable, confidence-calibrated set-valued outputs, \ConMIL{} reduces the risk of misdiagnoses and misclassifications compared to opaque black-box models. The synergy between MIL’s interpretability and conformal prediction’s uncertainty quantification is designed to meet the demands of high-stakes healthcare contexts. Through comprehensive evaluations, including sleep stage and arrhythmia classification (see Section \ref{sec: Case studies}), we demonstrate that integrating \ConMIL{} with LLMs substantially improves diagnostic accuracy. These findings underscore the importance of combining interpretability and set-valued prediction to advance reliable and effective AI-informed clinical decision support.

\section{Related Works} \label{sec:related-works}

\subsection{Physiological Signal Classification}
Physiological signals, such as ECG and EEG signals, are critical for monitoring health, diagnosing diseases, and predicting clinical outcomes. Tasks like detecting arrhythmias from ECG \cite{ansari2023deep}, predicting seizures from EEG \cite{kuhlmann2018seizure}, and classifying sleep stages \cite{yue2024research} rely on accurate classification. While traditional approaches used feature engineering and classical machine learning \cite{dai2013atrial, faust2015wavelet}, deep learning models, including CNNs \cite{ismail2020inceptiontime,avanzato2020automatic,luo2024moderntcn,wang2024contrast}, and transformer-based architectures \cite{vaswani2017attention,eldele2021attention,li2024mts,wang2024medformer} now dominate the field due to their ability to handle complex temporal patterns.

\subsection{Time Series Multiple Instance Learning (MIL)}
MIL addresses tasks where labels exist at the bag level, with only some instances within a bag relevant to the label \cite{dietterich1997solving}. Applied to time series, MIL identifies critical segments that contribute to classification \cite{chen2024timemil,early2024inherently}, making it highly interpretable and effective in noisy medical data. Unlike post-hoc interpretation methods such as SHAP \cite{scott2017unified} or LIME \cite{ribeiro2016should}, MIL directly integrates interpretability into the modeling process, ensuring robust performance and precision in high-stakes domains like healthcare. \textcolor{black}{However, most existing time series MIL approaches,such as TimeMIL~\cite{chen2024timemil} and MILLET~\cite{early2024inherently}, focus on providing interpretability for a single final prediction or a global class token, which can limit their utility in scenarios involving multiple plausible diagnoses or latent states. Moreover, they often lack mechanisms for producing calibrated, set-valued predictions with formal uncertainty guarantees. \ConMIL{} directly addresses these limitations by introducing QTrans-Pooling, which enables fine-grained interpretability across multiple classes, and by seamlessly integrating conformal prediction into the MIL framework to support statistically rigorous uncertainty quantification.}

\subsection{Conformal Prediction}
Conformal prediction \cite{vovk2005algorithmic} offers set-valued predictions with predefined coverage, ensuring rigorous uncertainty quantification. Its adaptability to complex tasks like MIL remains underexplored but holds promise for enhancing reliability and interpretability in physiological signal classification. Compared to Bayesian approaches \cite{mackay1992practical} or Monte Carlo dropout \cite{gal2016dropout}, conformal prediction guarantees valid coverage is computationally efficient and non-parametric, making it particularly suited for medical applications where accuracy and interpretability are critical. \textcolor{black}{The recently proposed Monty Hall method~\cite{vishwakarma2024monty} claimed that providing LLMs with conformal prediction sets, narrowed options, can significantly improve their decision-making capabilities. However, relying solely on the prediction set is insufficient. To better support informed and reliable decisions, additional context such as per-class interpretabilityg is crucial. These richer inputs activate the strength of LLM in contextual reasoning and enable them to synthesize uncertainty-aware insights rather than simply choosing from a reduced label set.”}

\subsection{Clinical Decision-Making with LLMs}
LLMs are transforming clinical decision-making by supporting diagnostics, treatment recommendations, and risk assessments. MModels like Med-PaLM \cite{singhal2023towards} have demonstrated near-expert-level performance on standardized exams through domain-specific pretraining and structured prompts. They excel in nuanced medical reasoning, including step-by-step diagnostic tasks, and have shown potential in analyzing visual data—such as ECG \cite{gunay2024accuracy}, as well as in applications within sleep medicine \cite{nakari2024sleep}. Integration of LLMs into clinical workflows improves accuracy, reduces cognitive load, and expands the utility of AI in healthcare. 

\section{Conformalized Multiple Instance Learning}
This section delves into motivations, foundational principles, and algorithmic underpinnings of \ConMIL{}, providing a comprehensive explanation of its design and technical validity.

\subsection{Problem Formulation}
\label{sec:problem-formulation}
We first formulate the problem of classification of physiological signals. Given an input $X=[x_1,x_2,\dots,x_T]$, where each $x_t\in\mathbb{R}^c$ represents a $c$-variates feature vector recorded at time $t$, the goal is to assign the correct class label $y\in\mathcal{Y}= \left\{1,\dots,K \right\}$ from $K$ possible classes. These classes may represent different medical conditions or diagnostic categories, such as arrhythmia types or sleep stages. 

Physiological signal classification can naturally be formulated as a MIL problem by treating signals as ``bags'', where each time step is considered an instance. This can be formally defined as:
\begin{align}
     y_i=\begin{cases}
 &0,\quad \text{iff} \sum^{T}_{t=1}y_i^t=0, y_i^t \in \left\{0, 1\right\} \\
 &1,\quad \text{otherwise},
\end{cases}
\end{align}
where $y^t_i$ represents the instance-level label, indicating whether an event of interest has occurred at that time step $t$. For a multiclass classification problem with $K$ classes, Physiological signal classification can be extended by performing several \emph{one-vs-rest} binary MIL tasks without violating the assumptions of MIL. Formally, this is defined as:
\begin{align}
     y_{i,k}=\begin{cases}
 &0,\quad \text{iff} \sum^{T}_{t=1}y_{i,k}^t=0, y_{i,k}^t \in \left\{0, 1\right\} \\
 &1,\quad \text{otherwise},
\end{cases}
\end{align}
where $y^t_{i,k}$ denotes a time point $t$ with significant contribution to class $k\in\left\{1,\dots ,K\right\}$. The final bag-level label is determined by majority voting, indicated by $y_i=\arg\max_k\sum^{T}_{t=1}y_{i,k}$.

Conformal prediction is a framework designed to provide reliable uncertainty estimates for model predictions. Instead of outputing a single predicted label, the conformal prediction generates a set value prediction that contains the true label with a user-defined probability, known as \emph{coverage}. In medical settings, for example, conformal prediction can offer a set of possible diagnoses, ensuring that the correct diagnosis is included in this set with at least $1-\alpha$ confidence, where $\alpha$ represents the error rate or miscoverage rate. Formally, given the ground truth label $y$, the input time series, our goal is to construct a set-valued prediction $\mathcal{S}_{\alpha}(X)\subseteq \mathcal{Y}$ such that:
\begin{align}
    \mathbb{P}(y\in\mathcal{S}_{\alpha}(X)) \geq 1-\alpha,
\end{align}
for a pre-specified miscoverage rate, such as 10\%.

\begin{figure}
\centering
\includegraphics[width=0.99\columnwidth]{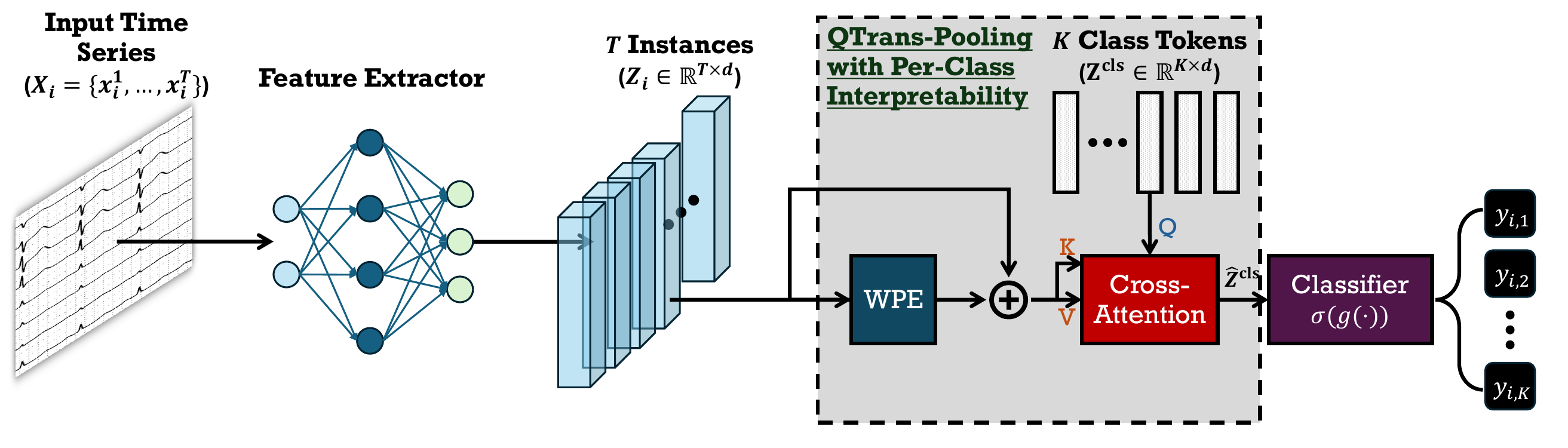}
\caption{Illustration of the MIL model used in \ConMIL{}. Our model is adapted from the one proposed by \cite{chen2024timemil} which uses InceptionTime \cite{ismail2020inceptiontime} as the feature extractor and encodes temporal correlations with Wavelet Positional Encoding (WPE)~\cite{chen2024timemil}. Instead of \textit{Trans-Pooling}, we have introduced the \textit{QTrans-Pooling} for enhanced per-class interpretability.} 
\label{fig:milnet}
\end{figure}

\subsection{\textit{QTrans-Pooling} with per-class interpretability}\label{sec:Qtrans}

We first recap the approximation of the general MIL methods stated in \cite{ilse2018attention,shao2021transmil},
\begin{theorem}\label{thm:1}
Let $S$ be a symmetric score function that is $(\delta_{\varepsilon}, \varepsilon)$-continuous with respect to the Hausdorff distance $d_H(\cdot, \cdot)$, meaning that for all $\varepsilon > 0 $ and $i\neq j$, if $d_H(X_i, X_j) < \delta_{\varepsilon} $, then
\begin{align}
|S(X_i) - S(X_j)| < \varepsilon.
\end{align}
Then, for any invertible map $\psi: \mathcal{X} \rightarrow \mathbb{R}^d $, there exist continuous functions $g $ and $\phi$ such that
\begin{align}
\left| S(X_i) - g\left(\psi\left( \{ \phi(x_i^t) : x_i^t \in X_i \} \right)\right) \right| < \varepsilon,
\end{align}
where $x_i^t$ denotes the $t$-th instance in the $i$-th bag $X_i$.
\end{theorem}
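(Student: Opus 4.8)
The plan is to recognize Theorem~\ref{thm:1} as a universal-approximation statement for permutation-invariant (``symmetric'') score functions and to prove it by the sum-decomposition argument familiar from the DeepSets-style literature, adapted to the Hausdorff-continuity hypothesis. First I would use symmetry: because $S$ is symmetric, $S(X_i)$ depends on the bag only through the multiset $\{x_i^t : x_i^t \in X_i\}$, so it suffices to approximate the function this induces on multisets of bounded cardinality contained in a fixed compact subset $\mathcal{K}\subseteq\mathcal{X}$, which is exactly where the $(\delta_\varepsilon,\varepsilon)$-continuity hypothesis has bite.

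The core construction is the instance embedding $\phi$. I would choose $\phi:\mathcal{X}\to\mathbb{R}^d$ so that the aggregate sending $\{\phi(x_i^t):t\}$ to its coordinatewise sum is \emph{injective} on multisets of size at most the maximum bag size $M$: taking $\phi$ to consist of power-sum / monomial features, the sum over a bag recovers the power sums of the instances, hence — via Newton's identities — the elementary symmetric polynomials, hence the coefficients of the monic polynomial whose root multiset is exactly $\{x_i^t\}$, so the multiset can be read off. This is the step that forces $d$ to be at least of order $M$, matching the known tightness of such representations. Composing with the given invertible $\psi$ preserves injectivity on multisets, so $X_i\mapsto \psi\big(\{\phi(x_i^t):t\}\big)$ separates bags up to the symmetry.

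On the (compact) image of this map I would then define $g_0$ by $g_0\big(\psi(\{\phi(x_i^t):t\})\big) := S(X_i)$; this is single-valued by the previous step, and it is continuous because the decoding map sending polynomial coefficients to the multiset of their roots is continuous in the Hausdorff metric on the relevant compact set, $\psi$ (hence $\psi^{-1}$ on the image) is continuous, and $S$ is Hausdorff-continuous by hypothesis. Extending $g_0$ to a continuous $g$ on all of $\mathbb{R}^d$ via the Tietze extension theorem yields continuous $g,\phi$ with $S(X_i)=g(\psi(\{\phi(x_i^t):t\}))$ exactly, which a fortiori gives the claimed inequality; if instead one requires $g$ and $\phi$ to come from a restricted class (e.g.\ neural networks), one approximates the exact pair uniformly on compacta and propagates the perturbation through the modulus $\delta_\varepsilon$ — choosing the approximation fine enough that the recovered multiset moves by less than $\delta_\varepsilon$ in Hausdorff distance yields $|S(X_i) - g(\psi(\cdots))|<\varepsilon$.

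The main obstacle is making the injective-pooling and continuous-decoding steps quantitative together: injectivity of the pooled representation is easy, but one needs its inverse to be \emph{continuous with a controlled modulus}, which rests on the continuous dependence of polynomial roots on their coefficients and on the bag size being uniformly bounded so that a finite-dimensional $\psi$ suffices. Keeping all estimates uniform over the compact instance domain, and deciding how to treat bags of variable (or unbounded) cardinality, is where the real care is required; everything else is bookkeeping with the continuity hypothesis.
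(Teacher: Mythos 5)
The paper does not prove Theorem~\ref{thm:1} at all --- it states it as a recap and defers entirely to \cite{ilse2018attention,shao2021transmil}, and your sum-decomposition argument (injective power-sum pooling, continuous decoding of the multiset from the pooled coefficients, Tietze extension of $g$, and error propagation through the modulus $\delta_\varepsilon$) is essentially the standard proof given in those references. The one point to watch is that instances here live in $\mathbb{R}^c$ rather than $\mathbb{R}$, so the Newton's-identities/root-recovery step must be replaced by its multivariate analogue (moment features determining a bounded multiset in $\mathbb{R}^c$, with continuity of the inverse on a compact domain), which you gesture at but do not fully spell out.
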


This theorem presents the pipeline of MIL paradigm that existing architectures typically consist of three main components: 1) a feature extractor $\phi(\cdot)$, which processes the input $X$ and extracts a set of $d$-dimensional feature embeddings $Z\in\mathbb{R}^{T\times d}=[z_1,z_2,\dots,z_T]$; 2) a MIL pooling layer $\psi(\cdot)$, which computes a weighted feature vector $Z^{pool}\in\mathbb{R}^{d}$ along with corresponding attention weights $A=[a_1,a_2,\dots,a_T]$, where $a_t\in\left\{0, 1\right\}$, for each instance, offering interpretability by highlighting the contributions of individual time steps to the final decision; and 3) a classifier $g(\cdot)$ which takes the pooled feature vector $Z^{pool}$ as input and outputs class probabilities $\hat{y}$. Formally, the MIL can be formulated as the following general process:
\begin{align}
\label{eq: mil-general}
    Z=\phi(X),\left\{Z^{pool},A\right\} = \psi(Z), \hat{y} = g(Z^{pool}).
\end{align}

Our study begins by examining the limitations of two recent MIL methods—Conjunctive Pooling \cite{early2024inherently} and \textit{Trans-Pooling} \cite{chen2024timemil}—within the context of time series analysis.
Conjunctive pooling \cite{early2024inherently} is a novel pooling method that independently applies instance-wise attention and classification to the time point embeddings, followed by scaling the time point predictions using the corresponding attention values. Slightly different from the general process, conjunctive pooling is defined as:
\begin{align}
\hat{y}_t = g(z_t), \hat{y}=\frac{1}{T}\sum_{t=0}^T (a_t\hat{y}_t), a_t \in\left\{0, 1\right\}=\sigma(W^{A}z_t).
\end{align}
In this formulation, $a_t$ is the attention value assigned to each time point $t$, calculated by applying a sigmoid activation $\sigma$ to the feature vectors $W^{A}z_t$. 

On the other hand, \textit{Trans-Pooling} \cite{chen2024timemil} introduces the use of Transformers~\cite{vaswani2017attention} in MIL, leveraging their self-attention mechanism to capture dependencies between a learnable class token $z^{\text{cls}}$ and each instance. Formally, given the concatenation $Z^{\text{cls}}$ of class token $z^{\text{cls}}$ and instance embeddings $Z$, the process is defined as:
\begin{align}
\hat{y}=g(\hat{z}^{\text{cls}}),\hat{Z}=\text{Attention}(W^{Q}Z^{\text{cls}}, W^{K}Z^{\text{cls}}, W^{V}Z^{\text{cls}})W^{O}=[\hat{z}^{\text{cls}}, \hat{z}_1, \dots, \hat{z}_T],
\end{align}
where $W^{Q}$, $W^{K}$, $W^{V}$,and $W^{O}$ are trainable parameters, the self-attention mechanism is computed as:
\begin{align}
\label{eq:attention}
    \text{Attention}(Q,K,V)=\softmax(\frac{QK^\dagger}{\sqrt{d}})V,
\end{align}
and $\hat{Z} = [\hat{z}^{\text{cls}}, \hat{z}_1, \dots, \hat{z}_T]$ is the concatenation of the class token $\hat{z}^{\text{cls}}$ and the instance embeddings after self-attention $\hat{z}_1, \dots, \hat{z}_T$. The attention mechanism enables the model to weigh the importance of each instance in relation to the class token, refining the pooled representation $\hat{z}^{\text{cls}}$ for final classification.

We observed that while both Conjunctive Pooling and Trans-Pooling offer a degree of interpretability, they fall short of delivering per-class interpretability in multi-class classification settings. Conjunctive Pooling aggregates instance-level predictions using a single attention weight, making it difficult to disentangle the contribution of individual time steps to each potential class. Trans-Pooling incorporates attention but relies on a single global class token, which inherently constrains interpretation to the dominant or final prediction. As a result, both methods may obscure valuable insights in cases where multiple diagnostic outcomes are plausible, an important limitation in applications involving comorbidities or ambiguous physiological patterns.

To address this, we proposed the \textit{QTrans-Pooling} used in \ConMIL{} as illustrated in Figure~\ref{fig:milnet}. Inspired by the work of \cite{chen2024timemil}, we also implement \textit{QTrans-Pooling} with learnable class tokens and Transformers. Considering the instance importance can be measured by the attention maps between class tokens and instances, to achieve per-class interpretability can be implemented via assigning each class an independent class token. To this end, we introduce a separate class token $z_k^{\text{cls}}$, allowing the model to compute class-specific attention weights for each instance. In \textit{QTrans-Pooling}, cross-attention between the class tokens and instance embeddings used to achieve per-class interpretability. Given the class tokens $Z^{\text{cls}} = [z_1^{\text{cls}}, \dots, z_K^{\text{cls}}]$, and instance embeddings $Z = [z_1, \dots, z_T]$, the cross-attention is computed as follows:
\begin{align}
    \text{Attention}(Q,K,V)=\sigma(\frac{QK^\dagger}{\sqrt{d}}+b)V,\quad b = -\log(T),
\end{align}
with $Q=W^{Q}Z^{\text{cls}}$, $K=W^{K}Z$, $V=W^{V}Z$. The bias term is defined as $b$ and $\sigma$ is the Sigmoid function. Sigmoid attention~\cite{ramapuram2024theory} is used since the softmax function may possibly cause attention to focus on a few features while ignoring other information. This process allows the class token $z_k^{\text{cls}}$ to attend to the embeddings of the instance, resulting in an attention map $A_k=[a_{k,1}, a_{k,2}, \dots, a_{k,T}]$, where each $a_{k,t}$ represents the importance of time step $t$ for class $k$. 

The use of cross-attention between class tokens and instance embeddings is pivotal for achieving per-class interpretability. By assigning a distinct learnable class token to each class, the model can independently assess the relevance of each instance to each class, rather than relying on a shared representation. This architectural design enables disentangled attention pathways, allowing the model to highlight which time segments are most informative for each potential outcome. Such capability is particularly important in multi-class physiological signal classification, where patients may exhibit patterns indicative of multiple comorbid conditions, and clinical decisions require transparent, class-specific explanations.

\textcolor{black}{To illustrate this concept in physiological signals, consider the task of arrhythmia detection from a 10-second ECG recording (the 'bag'). The overall recording might be labeled as containing an 'arrhythmia'. Multiple Instance Learning aims to identify which specific, shorter segments such as the R-R interval(the 'instances') within that 10-second window are most indicative of the arrhythmia, even if other parts of the ECG appear normal. Our proposed QTrans-Pooling mechanism further refines this by providing per-class interpretability; for example, it could highlight specific instances that point towards 'Atrial Fibrillation' while simultaneously showing different instances that might suggest 'Ventricular Tachycardia', if the model is considering multiple arrhythmia types.}


\begin{theorem}\label{thm:2}
Compared with \textit{Trans-Pooling} \cite{chen2024timemil}, applying \textit{\textit{QTrans-Pooling}} with Per-Class Interpretability reduces class-wise variability in the latent space. This reduction simplifies the task of learning effective classification boundaries. 

Specifically, we use class-conditioned entropy $H$ to quantify the homogeneity of the latent feature space within each class. The following inequality holds:
\begin{align}
         H(\hat{Z}|\hat{z}^{cls}_1,\cdots,\hat{z}^{cls}_K)\leq H(\hat{Z}|\hat{z}^{cls}),
\end{align}  
where $H(\hat{Z}|\hat{z}^{cls}_1,\cdots,\hat{z}^{cls}_K)$ represents the class-conditioned entropy when using \textit{QTrans-Pooling} with distinct class tokens for each class, and $H(\hat{Z}|\hat{z}^{cls})$ represents the class-conditioned entropy under \textit{Trans-Pooling}, which relies on a single global class token.
A lower class-conditioned entropy implies tighter clustering of instances within each class, which in turn leads to simpler and more robust decision boundaries in the feature space. \end{theorem}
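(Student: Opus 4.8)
\textbf{Proof proposal for Theorem~\ref{thm:2}.}

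The plan is to view each pooled class-token output $\hat{z}^{\text{cls}}_k$ in \textit{QTrans-Pooling} as a conditional refinement of the single global token $\hat{z}^{\text{cls}}$ in \textit{Trans-Pooling}, and then invoke the standard ``conditioning reduces entropy'' inequality $H(U \mid V, W) \le H(U \mid V)$ together with a data-processing-type argument. First I would set up the information-theoretic objects carefully: treat $\hat{Z}$ (the post-attention instance embeddings) and the class tokens as random variables induced by the data distribution over bags $X$. The key structural observation is that the global class token $\hat{z}^{\text{cls}}$ is, up to the learned parameters, a deterministic (or at least measurable) function of the collection $(\hat{z}^{\text{cls}}_1,\dots,\hat{z}^{\text{cls}}_K)$ — e.g. the one selected by $\arg\max_k$ in the bag-level voting rule of Section~\ref{sec:problem-formulation}, or more generally a pooled/aggregated summary of the per-class tokens. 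Once that is established, the chain of conditioning variables $(\hat{z}^{\text{cls}}_1,\dots,\hat{z}^{\text{cls}}_K)$ is at least as informative as $\hat{z}^{\text{cls}}$ alone.

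The second step is the inequality itself. Writing $\mathcal{G} = \sigma(\hat{z}^{\text{cls}}_1,\dots,\hat{z}^{\text{cls}}_K)$ for the $\sigma$-algebra generated by the per-class tokens and $\mathcal{G}' = \sigma(\hat{z}^{\text{cls}}) \subseteq \mathcal{G}$ for the coarser one generated by the global token, monotonicity of conditional entropy under refinement of the conditioning $\sigma$-algebra gives
\begin{align}
H(\hat{Z}\mid \hat{z}^{\text{cls}}_1,\dots,\hat{z}^{\text{cls}}_K) = H(\hat{Z}\mid\mathcal{G}) \le H(\hat{Z}\mid\mathcal{G}') = H(\hat{Z}\mid \hat{z}^{\text{cls}}),
\end{align}
which is exactly the claimed bound. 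I would then connect this back to the stated interpretation — ``tighter clustering'' and ``simpler decision boundaries'' — by noting that lower class-conditioned entropy means the within-class spread of $\hat{Z}$ carries less residual uncertainty, so a classifier $g(\cdot)$ operating on the pooled representation faces a lower-entropy (hence more separable) conditional distribution; a short Fano-style remark, or a direct appeal to the fact that margin-based separability is controlled by conditional entropy, would make this rigorous enough for the paper's purposes.

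The main obstacle I anticipate is justifying the claim that $\hat{z}^{\text{cls}}$ is genuinely a function of $(\hat{z}^{\text{cls}}_1,\dots,\hat{z}^{\text{cls}}_K)$ — i.e. that the two pooling schemes are comparable on a common probability space in a way that makes the conditioning nested. In \textit{Trans-Pooling} the global token is produced by self-attention over $Z^{\text{cls}} = [z^{\text{cls}}, Z]$, whereas in \textit{QTrans-Pooling} each $\hat{z}^{\text{cls}}_k$ comes from cross-attention of $z^{\text{cls}}_k$ against $Z$; these are not literally the same map, so the argument really needs either (i) a coupling/construction in which the single token is obtained by collapsing the $K$ class tokens into one (e.g. initializing $z^{\text{cls}} = \frac{1}{K}\sum_k z^{\text{cls}}_k$ and noting attention commutes appropriately with this averaging in the relevant limit), or (ii) a slightly weaker statement phrased in terms of the induced partitions of the embedding space. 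I would handle this by stating the comparison as an assumption on the shared feature extractor $\phi$ and instance embeddings $Z$ (both schemes sit on top of the same $Z$ per Eq.~\eqref{eq: mil-general} and Figure~\ref{fig:milnet}), and treating the per-class tokens as a strictly finer sufficient statistic for the class structure than the single token — after which the entropy inequality is immediate. A secondary, more routine obstacle is making ``class-conditioned entropy'' precise (differential vs. discrete, and conditioning on the class label $y$ versus on the token); I would fix the convention up front so that $H(\hat{Z}\mid\cdot)$ denotes the entropy of the instance embeddings conditioned jointly on the label and the relevant token(s), which is the quantity for which the refinement inequality cleanly applies.
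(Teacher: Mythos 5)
Your proposal takes essentially the same route as the paper: the paper's entire proof is the one-line observation that conditioning on additional information can only maintain or lower conditional entropy (citing Cover and Thomas), which is exactly your central inequality. The obstacle you flag --- that the global token $\hat{z}^{\text{cls}}$ of Trans-Pooling is not literally a measurable function of the per-class tokens $(\hat{z}^{\text{cls}}_1,\dots,\hat{z}^{\text{cls}}_K)$ of QTrans-Pooling, since the two schemes produce different post-attention embeddings $\hat{Z}$ via different attention maps, so the conditioning is not automatically nested on a common probability space --- is a genuine issue, but the paper's proof does not address it either; your explicit coupling assumption is more care than the published argument contains.
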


\begin{proof}
    The proof is straightforward, as conditioning on additional information can only maintain or lower the conditional information \cite{cover1999elements}. 
\end{proof}

\begin{remark}
  A reduction in class-conditioned entropy implies that features within the same class form tighter clusters, thereby reducing variability between classes. Consequently, minimizing class-wise variability across all classes facilitates the identification of simpler and more robust decision boundaries for classification.
\end{remark}

\begin{remark}
Lower class-conditioned entropy enhances confidence in assigning instances to their correct classes, improving classification reliability and interpretability.
\end{remark}

We also compare the computational complexity between \emph{QTrans-Pooling} and \emph{Trans-Pooling} from their attention mechanisms. Trans-Pooling relies on self-attention, which scales with $\mathcal{O}(T^{2}d)$ where $T$ is the number of instances and $d$ is the feature dimension. In contrast, QTrans-Pooling employs cross-attention between instance embeddings and multiple class tokens, reducing the computational cost to $\mathcal{O}(TKd)$, where $K$ is the number of classes and typically much smaller than $T$, This results in QTrans-Pooling being more efficient in scenarios with a large number of instances, as it avoids the quadratic complexity of self-attention while still maintaining interpretability.

\subsection{Conformalizing the MIL model} \label{sec:crc}
We now discuss how to achieve reliable set-valued prediction with per-class coverage guarantees by integrating conformal prediction into \ConMIL{}. \textcolor{black}{For example, in the context of sleep stage classification from an EEG segment, if a clinician sets the desired confidence level to 95\% (i.e., an error rate $\alpha$=0.05), \ConMIL{}, through conformal prediction, would not just output a single sleep stage prediction. Instead, it might generate a prediction set, such as \{N2, REM\}. This set is accompanied by the statistical guarantee that the patient's true current sleep stage is contained within this set \{'N2', 'REM'\} with at least 95\% probability, provided the assumptions of the conformal prediction framework are met (e.g., data exchangeability or adjustments for distribution shift as discussed in Theorems \ref{theorem:CRC_Exchangeable} and \ref{theorem:CRC_NONExchangeable}). This provides a transparent and statistically grounded way for clinicians to understand the model's uncertainty.}

To this end, we aim to construct set-valued prediction that controls the False Negative Rate (FNR) for each class $k$, ensuring that the true class is included in the set-valued prediction with high probability. We consider implementing \ConMIL{} under the split conformal prediction setting~\cite{papadopoulos2002inductive}. We reserve a calibration dataset which remains unseen during training the MIL model; we seek to construct a set-valued prediction $\mathcal{S}_{\alpha}(X_\text{test})$ of the new test data that is valid in the following sense:
\begin{align}
\label{eq:fnr_1}
\nonumber
    \mathbb{P}(y_\text{test}\in\mathcal{S}_{\alpha}(X_\text{test})|y_\text{test}=k) \geq 1-\alpha,\forall k\in\left\{1,\dots ,K\right\},
\end{align}
where $\alpha$ is the predefined confidence level and $\mathcal{S}_{\alpha}(X)$ represents the set-valued prediction for input $X$. This guarantees that for each class $k$, the true class is captured with at least $1-\alpha$ probability,  controlling the likelihood of missing a true positive (i.e., controlling the FNR).

For each class $k$, the FNR in a dataset $\mathcal{D}=\left\{(X_1,y_1),\dots,(X_N,y_N)\right\}$, where $N$ represents the number of samples, can be defined in terms of a risk function. The risk function quantifies the empirical likelihood that the presence of class $k$ is not correctly predicted in the dataset. Formally, the risk for class $k$ is defined as:
\begin{align}
    \mathcal{R}_k(\mathcal{D})=1-\frac{1}{N}\sum^{N_k}_{i=0}\mathbb{I}(y_i\in\mathcal{S}_{\alpha}(X_i)),
\end{align}
where $N_k$ is the number of samples in $\mathcal{D}$ with true class label $k$, $\mathbb{I}$ is the indicator function that equals 1 if the condition is satisfied and 0 otherwise, $y_i\in\mathcal{S}_{\alpha}(X_i)$ indicates whether the true label is included in the set-valued prediction. In binary classification, the model's final prediction is often decided by thresholding the predicted probability with a predefined threshold $\lambda$. The binary prediction $\hat{y}_i$ is determined by rounding the predicted probability $\hat{p}(X_i)$:
\begin{align}
    \hat{y}_i=\begin{cases}
 &1,\quad \text{if}\quad \hat{p}(X_i)\geq 1-\lambda, \\
 &0,\quad \text{otherwise}.
\end{cases}
\end{align}
The per-class FNR is controlled by selecting an appropriate threshold $\lambda$ such that the set-valued prediction includes the true class label with the desired coverage probability. To generalize this for multi-class classification, we assign a threshold $\lambda_k$ for each class $k$, forming a set of thresholds  $\lambda=\left\{\lambda_1,\dots, \lambda_K\right\}$. These thresholds act as additional parameters for constructing the set-valued prediction $\mathcal{S}_{\alpha}(X_i, \lambda)$, which is defined as:
\begin{align}
    \mathcal{S}_{\alpha}(X_i, \lambda) := \left\{\hat{y}_{i,k}:\hat{p}(X_i)\geq 1-\lambda_k\right\}.
\end{align}
This formulation ensures that the set-valued prediction includes all classes for which the predicted probability exceeds the class-specific threshold $\lambda_k$.

For picking the threshold $\lambda$, Conformal Risk Control (CRC) \cite{angelopoulos2022conformal} is a natural choice, as it provides a principled framework for balancing coverage guarantees with risk control. CRC extends traditional conformal prediction by incorporating the notion of risk minimization, allowing us to adjust thresholds dynamically to ensure that the set-valued prediction achieves a predefined level of reliability while controlling FNRs across multiple classes.

\begin{definition}
(\textbf{Conformal Risk Control} \cite{angelopoulos2022conformal})
Given an arbitrary bounded risk function $\mathcal{R}_{\lambda}(\cdot)\in(-\inf,B]$ for some $B<\inf$ that is monotonically non-increasing with respect to the threshold $\lambda$, the goal of CRC is to select the smallest possible threshold such that the risk remains controlled at a predefined level. Formally, for a given calibration dataset $\mathcal{D}_{\text{cal}}$ and a risk level $\alpha$, CRC selects $\hat{\lambda}$ such that the risk on the test data $\mathcal{D}_{\text{test}}$ is controlled:
\begin{align}
    \mathbb{E}[\mathcal{R}_{\hat{\lambda}}(\mathcal{D}_{\text{test}})] \leq \alpha,
\end{align}
where the optimal threshold $\hat{\lambda}$ is selected by solving the following optimization problem:
\begin{align}
\label{eq:optimal-threshold}
    \hat{\lambda} = \inf \left\{\lambda: \frac{n}{n+1}\mathcal{R}_{\hat{\lambda}}(\mathcal{D}_{\text{cal}})+\frac{B}{n+1}\leq \alpha \right\}
\end{align}
where $\mathcal{R}_{\hat{\lambda}}(\mathcal{D}_{\text{cal}})$ is the empirical risk based on the calibration data.
\end{definition}

By using CRC,  we are able to dynamically adjust the threshold $\lambda$ on the calibration dataset to achieve the per-class coverage. To demonstrate that CRC provides guarantees, we first relax the assumption of independent and identically distributed (i.i.d.) data and instead consider an exchangeable data distribution.
\begin{definition}
(\textbf{Exchangeable data distribution})
    Let $\mathcal{X}$ and $\mathcal{Y}$ represent the time series input and label space, respectively. A data distribution in $\mathcal{X}\times \mathcal{Y}$ is said to be exchangeable if and only if the following condition holds:
    \begin{align}
        \mathbb{P}((X_{\pi(1)},y_{\pi(1)}), \dots,(X_{\pi(N)},y_{\pi(N)}))=\mathbb{P}((X_{1},y_{1}), \dots,(X_{N},y_{N}))
    \end{align}
    for any finite sample $\left\{(X_{n},y_{n})\right\}^N_{n=1}$, where $\pi(\cdot)$ denotes an arbitrary permutation of the dataset indices.
\end{definition}

Under the assumption of an exchangeable data distribution, CRC provides exact guarantees as stated in the following theorem.
\begin{theorem}
\label{theorem:CRC_Exchangeable}
(\textbf{Conformal risk control over exchangeable data distribution})
Given a risk function $\mathcal{R}_{\lambda}$ that is right-continuous and non-increasing with respect to $\lambda$, and a calibration dataset $\mathcal{D}_{cal}=\left\{\mathcal{D}_n\right\}_{n=1}^{N}$ containing $N$ data points, we also denote any test point as $\mathcal{D}_{N+1}$. If the calibration and test datasets are sampled from an exchangeable distribution $\mathcal{X}\times \mathcal{Y}$, then the following conditions hold: 
\begin{align}
    \mathcal{R}_{\lambda_{\text{max}}}(\mathcal{D}_n)\leq \alpha, \sup_\lambda\mathcal{R}_{\lambda}(\mathcal{D}_n)\leq B<\infty \quad almost\quad surely.
\end{align}
and thus, 
\begin{align}
    \mathbb{E}[\mathcal{R}_{\hat{\lambda}}(\mathcal{D}_{N+1})] \leq \alpha.
\end{align}
where $\hat{\lambda}$ is the optimal threshold selected by CRC as per Eq. \ref{eq:optimal-threshold}.
\end{theorem}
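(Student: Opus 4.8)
The plan is to adapt the proof of the Conformal Risk Control guarantee of \cite{angelopoulos2022conformal} to the exchangeable setting. Write $R_n(\lambda) := \mathcal{R}_{\lambda}(\mathcal{D}_n)$ for $n = 1, \dots, N+1$, where $\mathcal{D}_{N+1}$ denotes the test point, and read $\mathcal{R}_{\lambda}(\mathcal{D}_{\text{cal}})$ in Eq.~\ref{eq:optimal-threshold} as the average $\frac{1}{N}\sum_{n=1}^{N} R_n(\lambda)$. By hypothesis each $R_n(\cdot)$ is right-continuous, non-increasing, bounded above by $B$, and satisfies $R_n(\lambda_{\text{max}}) \le \alpha$ almost surely; and since $(\mathcal{D}_1,\dots,\mathcal{D}_{N+1})$ is exchangeable, the collection of random functions $(R_1,\dots,R_{N+1})$ is exchangeable as well.

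First I would introduce the ``oracle'' threshold that also sees the test point,
\[
\lambda^{*} := \inf\left\{\lambda : \tfrac{1}{N+1}\textstyle\sum_{n=1}^{N+1} R_n(\lambda) \le \alpha\right\}.
\]
The sublevel set here is non-empty because it contains $\lambda_{\text{max}}$ (each $R_n(\lambda_{\text{max}}) \le \alpha$), so $\lambda^{*} \le \lambda_{\text{max}} < \infty$; and right-continuity of the $R_n$ guarantees that the averaged risk is actually $\le \alpha$ at $\lambda = \lambda^{*}$, not merely in the limit. Second, I would establish the dominance $\hat{\lambda} \ge \lambda^{*}$: since $R_{N+1}(\lambda) \le B$ pointwise, the calibration-only objective $\frac{N}{N+1}\cdot\frac{1}{N}\sum_{n=1}^{N} R_n(\lambda) + \frac{B}{N+1}$ is everywhere $\ge \frac{1}{N+1}\sum_{n=1}^{N+1} R_n(\lambda)$; both functions are non-increasing in $\lambda$, so the sublevel set defining $\hat{\lambda}$ is contained in that defining $\lambda^{*}$, and taking infima gives $\hat{\lambda} \ge \lambda^{*}$.

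Third — the crux — I would invoke exchangeability. The oracle threshold $\lambda^{*}$ depends on $(R_1,\dots,R_{N+1})$ only through their unordered multiset, hence is a permutation-invariant function of $(\mathcal{D}_1,\dots,\mathcal{D}_{N+1})$; therefore $R_{N+1}(\lambda^{*})$ has the same distribution as $R_j(\lambda^{*})$ for every $j$, and consequently
\[
\mathbb{E}\big[\mathcal{R}_{\lambda^{*}}(\mathcal{D}_{N+1})\big] = \mathbb{E}[R_{N+1}(\lambda^{*})] = \mathbb{E}\!\left[\tfrac{1}{N+1}\textstyle\sum_{n=1}^{N+1} R_n(\lambda^{*})\right] \le \alpha .
\]
Finally, monotonicity of $R_{N+1}$ together with $\hat{\lambda} \ge \lambda^{*}$ gives $\mathcal{R}_{\hat{\lambda}}(\mathcal{D}_{N+1}) = R_{N+1}(\hat{\lambda}) \le R_{N+1}(\lambda^{*})$ pointwise, and taking expectations yields $\mathbb{E}[\mathcal{R}_{\hat{\lambda}}(\mathcal{D}_{N+1})] \le \alpha$, which is the claim; the per-class FNR guarantee of Section~\ref{sec:crc} then follows by specializing $\mathcal{R}_{\lambda}$ to the per-class risk $\mathcal{R}_k$.

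The main obstacle is making the symmetry step fully rigorous: one must check carefully that $\lambda^{*}$ really is a symmetric (permutation-invariant) measurable function of the data so that exchangeability transfers to $R_{N+1}(\lambda^{*})$, and handle the edge cases where the defining infimum is delicate — which is precisely where right-continuity of $\mathcal{R}_{\lambda}$ (not mere continuity) is needed to ensure $\tfrac{1}{N+1}\sum_{n=1}^{N+1} R_n(\lambda^{*}) \le \alpha$ holds at the infimum itself. The dominance inequality $\hat{\lambda} \ge \lambda^{*}$ and the closing monotonicity transfer are then routine.
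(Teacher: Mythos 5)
Your proposal is correct and is essentially the argument the paper relies on: the paper's own ``proof'' is merely a citation to Angelopoulos et al.\ (2022), and your oracle-threshold construction, the dominance $\hat{\lambda}\ge\lambda^{*}$ via the $B/(N+1)$ slack, the symmetry step using permutation-invariance of $\lambda^{*}$, and the closing monotonicity transfer reproduce exactly the proof of the main theorem in that reference. You also correctly repair the paper's circular notation in Eq.~\ref{eq:optimal-threshold} by reading the empirical risk as a function of $\lambda$ averaged over the calibration set.
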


\begin{proof}
The proof is provided in \cite{angelopoulos2022conformal}.
\end{proof}

In real-world applications, physiological signal data are often sampled from non-stationary distributions, meaning the datasets may not adhere to the assumption of exchangeable data distribution. For instance, data might be collected from different individuals, across diverse demographic groups, or using various hardware conditions. In such cases, it becomes essential to assess the conformal guarantees under distributional shifts.

To address this, we rely on the results of non-exchangeable split conformal prediction as introduced by \cite{barber2023conformal}, which allows us to quantify the coverage gap $\Delta_{\text{Cov}}=\alpha-\alpha^{*}$, where $\alpha$ is the specified risk level and $\alpha^{*}$ is the observed risk level. This gap can be bounded using the total variation (TV) distance. 
\begin{lemma}
(\textbf{Total variation (TV) bound})
    Given a bounded function $h:\Omega\rightarrow [0,B]$ on a measurable space $(\Omega,\mathcal{A})$ and let $P$ and $Q$ be two probability measures on $(\Omega,\mathcal{A})$, then:
    \begin{align}
        |\mathbb{E}_{P}[h]-\mathbb{E}_{Q}[h]|\leq Bd_{\text{TV}}(P,Q)
    \end{align}
    where $d_{\text{TV}}(P,Q)=\frac{1}{2}\sup_{h}|\mathbb{E}_{P}[h]-\mathbb{E}_{Q}[h]|$.
\end{lemma}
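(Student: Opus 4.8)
The plan is to reduce the general bounded case $h:\Omega\to[0,B]$ to the ``centered and normalized'' case by an affine change of variable, and then invoke the variational definition of $d_{\text{TV}}$ directly. First I would set
\begin{align}
    g := \frac{2}{B}\,h - 1,
\end{align}
so that $g:\Omega\to[-1,1]$ is measurable with $\|g\|_\infty\le 1$. The key algebraic observation is that subtracting the constant $1$ does not affect the difference of expectations: since $P$ and $Q$ are probability measures, $\mathbb{E}_P[1]=\mathbb{E}_Q[1]=1$, and hence
\begin{align}
    \mathbb{E}_P[g]-\mathbb{E}_Q[g] = \frac{2}{B}\bigl(\mathbb{E}_P[h]-\mathbb{E}_Q[h]\bigr).
\end{align}

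Next I would apply the definition of the total variation distance. Reading $d_{\text{TV}}(P,Q)=\tfrac12\sup_{h}|\mathbb{E}_P[h]-\mathbb{E}_Q[h]|$ as a supremum over all measurable test functions bounded by $1$ in absolute value (equivalently $d_{\text{TV}}(P,Q)=\sup_{A\in\mathcal{A}}|P(A)-Q(A)|$), the function $g$ constructed above is admissible, so $|\mathbb{E}_P[g]-\mathbb{E}_Q[g]|\le 2\,d_{\text{TV}}(P,Q)$. Combining this with the identity from the previous step gives
\begin{align}
    |\mathbb{E}_P[h]-\mathbb{E}_Q[h]| = \frac{B}{2}\,\bigl|\mathbb{E}_P[g]-\mathbb{E}_Q[g]\bigr| \le B\,d_{\text{TV}}(P,Q),
\end{align}
which is the claimed inequality.

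If one prefers not to rely on the variational form, the same conclusion follows from the layer-cake representation: writing $h(\omega)=\int_0^B \mathbb{I}\{h(\omega)>t\}\,dt$ and applying Fubini yields $\mathbb{E}_P[h]-\mathbb{E}_Q[h]=\int_0^B\bigl(P(h>t)-Q(h>t)\bigr)\,dt$, whence $|\mathbb{E}_P[h]-\mathbb{E}_Q[h]|\le \int_0^B|P(h>t)-Q(h>t)|\,dt\le B\sup_{A}|P(A)-Q(A)|$. The only genuine subtlety — and the step I would be most careful about — is pinning down the normalization convention in the definition of $d_{\text{TV}}$, since the factor of $\tfrac12$ together with the admissible class of test functions must be chosen consistently for the constant $B$ (rather than $2B$) to appear; everything else is a routine rescaling. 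This bound is exactly what is needed downstream to convert a total-variation bound on the distribution shift between calibration and test data into a coverage-gap bound $\Delta_{\text{Cov}}$ for conformal risk control under the non-exchangeable setting.
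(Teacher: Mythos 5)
Your proof is correct. Note that the paper itself supplies no argument for this lemma: it is stated as a standard fact and the surrounding results (Theorems \ref{theorem:CRC_Exchangeable} and \ref{theorem:CRC_NONExchangeable}) simply defer to the cited literature, so there is no "paper proof" to compare against — your write-up fills a genuine gap. Both of your routes are valid: the affine rescaling $g = \tfrac{2}{B}h - 1$ reduces the claim to the variational characterization $d_{\text{TV}}(P,Q) = \tfrac12\sup_{\|f\|_\infty \le 1}\lvert\mathbb{E}_P[f]-\mathbb{E}_Q[f]\rvert$, and the layer-cake identity $\mathbb{E}_P[h]-\mathbb{E}_Q[h]=\int_0^B\bigl(P(h>t)-Q(h>t)\bigr)\,dt$ combined with $\sup_A\lvert P(A)-Q(A)\rvert = d_{\text{TV}}(P,Q)$ gives the same constant $B$ without invoking the dual form. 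You are also right to flag the normalization as the only delicate point: as printed, the paper's definition of $d_{\text{TV}}$ is ambiguous (it reuses $h$ as the supremum variable and does not restrict the class of test functions, under which reading the supremum would be infinite), and your reading — supremum over measurable functions bounded by $1$ in absolute value, equivalently $\sup_A\lvert P(A)-Q(A)\rvert$ — is the one that makes the stated constant $B$ (rather than $2B$ or $B/2$) come out correctly and is consistent with how the bound is used in Theorem \ref{theorem:CRC_NONExchangeable}.
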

Intuitively, $d_{\text{TV}}$ measures the largest distance between two distributions. Using this measure, we can bound the coverage guarantee of CRC when the test data comes from a non-exchangeable distribution. In this scenario, we assume that while the calibration set is sampled from an exchangeable distribution, the test samples are drawn from a distinct distribution.

\begin{theorem}
\label{theorem:CRC_NONExchangeable}
(\textbf{Conformal risk control over non-exchangeable data distribution})
    Given risk function $\mathcal{R}_{\lambda}$ with the same properties as Theorem \ref{theorem:CRC_Exchangeable}, assume that the calibration data $\mathcal{D}_{cal}=\left\{\mathcal{D}_n\right\}_{n=1}^{N}$ and the test data point $\mathcal{D}_{N+1}$ are sampled from non-exchangeable distributions. Under these conditions, CRC provides the following coverage guarantee:
    \begin{align}
        \mathbb{E}[\mathcal{R}_{\hat{\lambda}}(\mathcal{D}_{N+1})] \leq \alpha+B\sum_{n=1}^{N}d_{\text{TV}}(\mathcal{D}_n,\mathcal{D}_{N+1}).
    \end{align}
    where the coverage gap is bounded by the total variation distance:
    \begin{align}
        \Delta_{\text{Cov}}=B\sum_{n=1}^{N}d_{\text{TV}}(\mathcal{D}_n,\mathcal{D}_{N+1})
    \end{align}
\end{theorem}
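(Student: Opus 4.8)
The plan is to reduce the non-exchangeable case to the exchangeable guarantee of Theorem \ref{theorem:CRC_Exchangeable} via a coupling / distribution-swapping argument, exactly in the spirit of Barber et al. (2023). First I would introduce an auxiliary ``idealized'' dataset $\widetilde{\mathcal{D}}_{N+1}$ obtained by replacing the test point $\mathcal{D}_{N+1}$ with a fresh draw from the same exchangeable law as the calibration set $\mathcal{D}_{cal}$; on this coupled dataset the combined sample $(\mathcal{D}_1,\dots,\mathcal{D}_N,\widetilde{\mathcal{D}}_{N+1})$ is exchangeable, so Theorem \ref{theorem:CRC_Exchangeable} gives $\mathbb{E}[\mathcal{R}_{\hat\lambda}(\widetilde{\mathcal{D}}_{N+1})]\le\alpha$, where $\hat\lambda$ is the CRC threshold from Eq.~\ref{eq:optimal-threshold}. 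The remaining task is to control the difference $\mathbb{E}[\mathcal{R}_{\hat\lambda}(\mathcal{D}_{N+1})]-\mathbb{E}[\mathcal{R}_{\hat\lambda}(\widetilde{\mathcal{D}}_{N+1})]$, i.e.\ the error incurred by swapping the real (shifted) test point back in for the idealized one.

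The key steps, in order, are: (1) Fix the risk level $\alpha$ and note that $\hat\lambda$ in Eq.~\ref{eq:optimal-threshold} is a function of the calibration data only, so it is held constant throughout the swap; this is what makes the monotone, right-continuous risk $\mathcal{R}_\lambda$ behave like the bounded functional $h$ in the TV bound lemma. (2) View $g(\cdot):=\mathcal{R}_{\hat\lambda}(\cdot)$ as a bounded measurable function with values in $[0,B]$ (using $\sup_\lambda\mathcal{R}_\lambda\le B$ from the hypotheses inherited from Theorem \ref{theorem:CRC_Exchangeable}), acting on the law of a single data point. (3) Apply the Total Variation bound lemma with $P$ the law of $\mathcal{D}_{N+1}$ and $Q$ the exchangeable law of $\mathcal{D}_n$ to get $|\mathbb{E}_P[g]-\mathbb{E}_Q[g]|\le B\,d_{\text{TV}}(\mathcal{D}_n,\mathcal{D}_{N+1})$ for each $n$. (4) Because the calibration indices are interchangeable, one averages (or, following Barber et al., takes the union-bound-style sum over the $N$ possible swap positions) to obtain the cumulative slack $B\sum_{n=1}^{N}d_{\text{TV}}(\mathcal{D}_n,\mathcal{D}_{N+1})$. (5) Combine: $\mathbb{E}[\mathcal{R}_{\hat\lambda}(\mathcal{D}_{N+1})]\le \mathbb{E}[\mathcal{R}_{\hat\lambda}(\widetilde{\mathcal{D}}_{N+1})]+B\sum_{n=1}^N d_{\text{TV}}(\mathcal{D}_n,\mathcal{D}_{N+1})\le \alpha+B\sum_{n=1}^N d_{\text{TV}}(\mathcal{D}_n,\mathcal{D}_{N+1})$, and read off $\Delta_{\text{Cov}}=B\sum_{n=1}^N d_{\text{TV}}(\mathcal{D}_n,\mathcal{D}_{N+1})$.

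The main obstacle I expect is making step (4) rigorous: the ``swap one point at a time and sum the TV distances'' accounting is precisely the delicate part of the non-exchangeable conformal argument, because $\hat\lambda$ depends on the whole calibration vector and one must argue that perturbing a single coordinate's law changes $\mathbb{E}[\mathcal{R}_{\hat\lambda}]$ by no more than $B\,d_{\text{TV}}$ of that coordinate — this requires either a careful reweighting/coupling over permutations (as in Barber et al.\ 2023) or an appeal to their Theorem directly with the per-class risk playing the role of the miscoverage functional. A secondary technical point is verifying that our specific per-class FNR risk $\mathcal{R}_k(\mathcal{D})=1-\frac{1}{N}\sum_{i=1}^{N_k}\mathbb{I}(y_i\in\mathcal{S}_\alpha(X_i))$ is genuinely right-continuous and non-increasing in $\lambda_k$ (monotonicity of $\mathcal{S}_\alpha(X_i,\lambda)$ in each $\lambda_k$ handles this) so that the hypotheses of Theorem \ref{theorem:CRC_Exchangeable} transfer; once that is in place, the bound follows by citing \cite{barber2023conformal} for the swap accounting and \cite{angelopoulos2022conformal} for the exchangeable base case.
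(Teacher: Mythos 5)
The paper's own ``proof'' of this theorem is a single line deferring entirely to \cite{barber2023conformal}, which is precisely the swap-one-point-and-accumulate-TV-distances argument you reconstruct, so your proposal follows the same route the paper intends. Your sketch is a faithful expansion of that citation, and you correctly flag the one genuinely delicate point --- that because $\hat\lambda$ depends on the whole calibration vector, the rigorous accounting in Barber et al.\ is done via total variation between swapped \emph{joint} sequences rather than marginal laws of single points, a subtlety the paper's statement itself glosses over.
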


\begin{proof}
Proof refers to \cite{barber2023conformal}.
\end{proof}
Thus, in the non-exchangeable setting, the coverage guarantee provided by CRC remains valid, albeit with an additional term $\Delta_{\text{Cov}}=B\sum_{n=1}^{N}d_{\text{TV}}(\mathcal{D}_n,\mathcal{D}_{N+1})$, which quantifies the deviation of the test distribution from the calibration distribution. This term ensures that CRC accounts for the shift between the two distributions, providing a more realistic coverage estimate under non-stationary conditions.

\begin{figure*}
\centering
\includegraphics[width=0.99\textwidth]{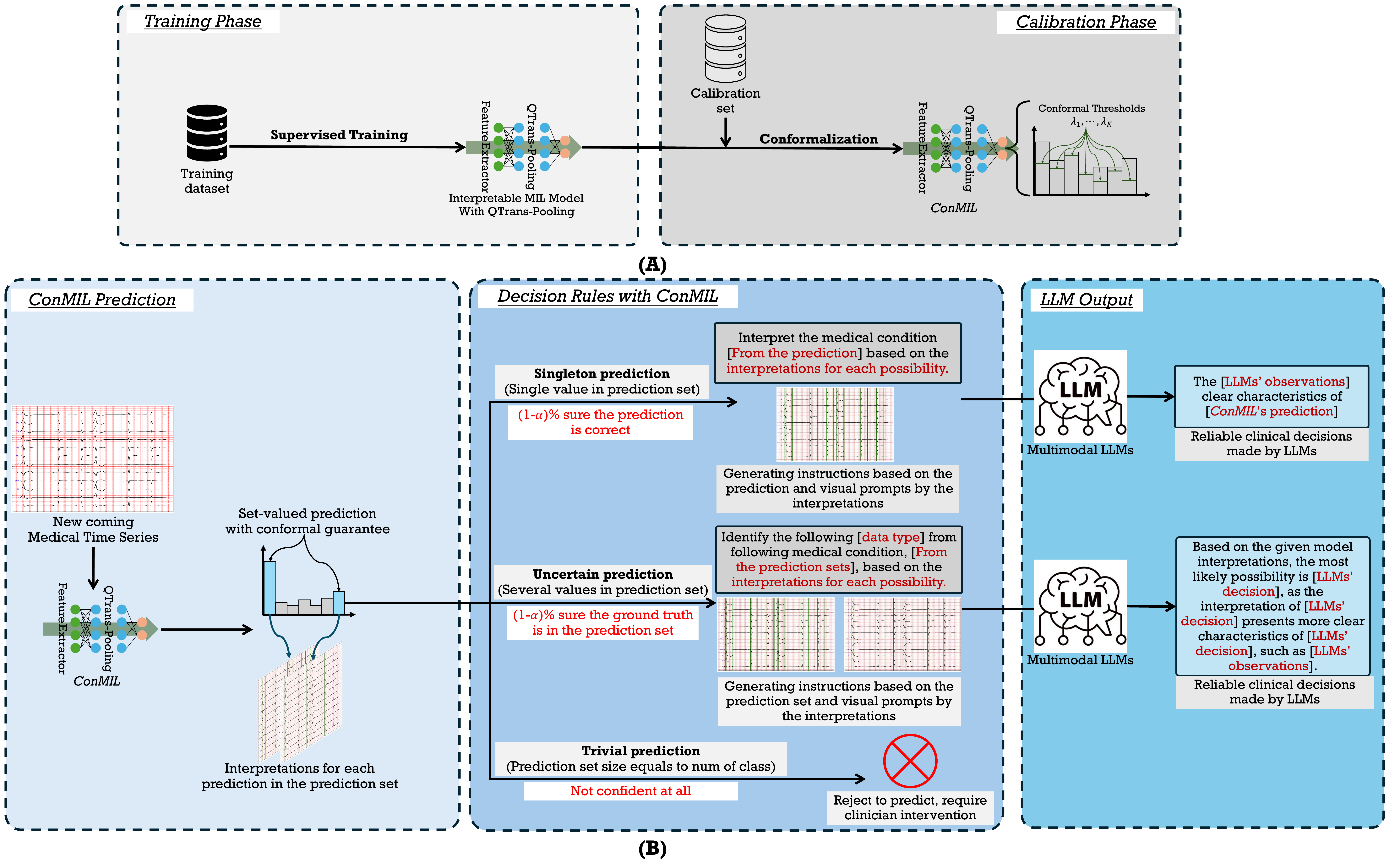}
\caption{Overview of the \ConMIL{} workflow for enhancing LLM-driven visual inspection of physiological signals. (A) Training and Calibration Process: \ConMIL{} undergoes supervised training with \textit{QTrans-Pooling} for per-class interpretability and calibration with conformal prediction to generate reliable thresholds $(\lambda_1,\dots,\lambda_K)$. This ensures set-valued predictions with guaranteed confidence levels. (B) Decision Support Framework: During deployment, \ConMIL{} generates set-valued predictions and interpretive visual prompts for new physiological signals. Based on the prediction type (singleton, uncertain, or trivial), the LLMs interpret the data to make reliable clinical decisions or flag ambiguous cases for clinician intervention, ensuring accuracy, transparency, and safety in healthcare workflows.}
\label{fig:ConMIL-Workflow}
\end{figure*}

\section{Enhancing the Visual Inspection Capabilities of LLMs with \ConMIL{}}
This section explores how \ConMIL{} enhances the ability of LLMs to perform visual inspection of physiological signals, detailing its integration and functional contributions within the workflow.

\subsection{\ConMIL{} as supportive plug-in: Pre-and-post deployment}
As previously noted, finetuning LLMs often encounters significant barriers. To address this, \ConMIL{} as a supportive SSM, serves as the primary domain-specific model. It augments pretrained LLMs, enabling robust and interpretable visual inspection of physiological signals. The integration of \ConMIL{} with LLMs follows a two-phase pipeline, as illustrated in Figure~\ref{fig:ConMIL-Workflow} (A).

During training, \ConMIL{} utilizes a MIL model enhanced with the \textit{QTrans-Pooling} mechanism (see Section~\ref{sec:Qtrans}) to extract meaningful patterns from physiological signals. This design enables the model to capture clinically relevant features while maintaining per-class interpretability. Such interpretability is essential in multi-class medical tasks, such as distinguishing between sleep stages or identifying different types of arrhythmia.

In the calibration phase, the supervised MIL model is conformalized using CRC techniques (Section~\ref{sec:crc}) to calibrate the MIL model on the calibration set, resulting in reliable conformal thresholds. These thresholds enable \ConMIL{} to generate set-valued predictions with predefined confidence guarantees, accompanied by rigorous uncertainty quantification. This calibration step enhances the reliability of clinical decision-making.

Post-deployment, \ConMIL{} generates set-valued predictions for new physiological signal data, supplemented by interpretive visual and textual prompts. These outputs are processed by LLMs, which apply their broader contextual reasoning to refine diagnostic conclusions. For instance, \ConMIL{} can highlight key time-series segments associated with specific diagnoses on visual plots while providing confidence-calibrated predictions. LLMs interpret these predictions and visual explainations to generate actionable insights, ensuring they are guided by domain-specific expertises while maintaining versatility in clinical workflows.

\subsection{Decision Rules with \ConMIL{}}\label{sec:decision-rules}
The decision rules governing the integration of \ConMIL{} and LLMs, as shown in Figure~\ref{fig:ConMIL-Workflow} (B), are designed to enhance reliability and trustworthiness in clinical settings. These rules leverage the set-valued predictions from \ConMIL{} and their interpretation by LLMs.

\paragraph{Singleton Prediction (High Confidence)}
When \ConMIL{} generates a singleton prediction (a single class in the prediction set), LLMs interpret the result directly using the associated interpretability prompts. The $(1-\alpha)\%$ confidence guarantee ensures high reliability, enabling the generation of precise clinical recommendations with minimal uncertainty.

\paragraph{Uncertain Prediction (Moderate Confidence)}
For predictions containing multiple possible classes, LLMs rely on the per-class interpretability provided by \ConMIL{} to prioritize the most likely diagnosis. The decision is guided by visual and textual explanations for each potential class, fostering a transparent and collaborative diagnostic process. For example, the LLMs can explain why one diagnosis is more plausible than others based on key features highlighted by \ConMIL{}.

\paragraph{Trivial Prediction (Low Confidence)}
If \ConMIL{} produces a prediction set encompassing all possible classes, signaling low confidence, the case is flagged for clinician review. This ensures that ambiguous or high-risk scenarios are escalated to human experts, prioritizing patient safety and preventing potential misdiagnoses. In addition to abstaining from uncertain predictions, our framework also provides fine-grained visual explanations by highlighting the most influential temporal segments within the time series. This interpretability is enabled by the MIL backbone, which identifies salient intervals that contribute to each prediction. These highlighted regions offer clinicians an intuitive view into the model’s reasoning process, facilitating more informed and efficient decision-making during clinician review.

\section{Experimental Results}

\subsection{Study design}
Our study is structured in four key parts:
\begin{enumerate}
    \item \textbf{Comparative Analysis of LLMs with and without \ConMIL{} Support (Section \ref{sec: trust_but_verify}):} We compare the diagnostic accuracy and reliability of LLMs operating independently versus those supported by \ConMIL{} under the proposed decision framework. This analysis quantifies the extent to which \ConMIL{} enhances LLM-based clinical decision-making.
    \item \textbf{Case Studies (Section \ref{sec: Case studies}):}  We conduct in-depth case studies involving samples where the top-1 prediction from conventional SSMs is incorrect. These cases illustrate how \ConMIL{}'s set-valued predictions and per-class interpretability assist LLMs in arriving at more accurate and transparent decisions.
    \item \textbf{“Interviewing the LLMs” (Section \ref{appendix:Going-deeper}):} To further explore the influence of \ConMIL{} on LLM decision-making, we design an “interview” protocol. This controlled experiment evaluates LLM responses under varying levels of model support—from no assistance to full access to \ConMIL{}'s interpretability and uncertainty quantification—providing qualitative insight into the model’s reasoning process.
\end{enumerate}

\subsection{Datasets and experimental setups}
We conduct our experiments only using the publicly available, de-identified datasets(SleepEDF~\cite{kemp2000analysis} and PTB-XL~\cite{wagner2020ptb}) for this study, ensuring compliance with ethical standards. These datasets are used for sleep stage classification and arrhythmia classification, tasks that are typically performed through visual inspection by clinicians or specialists in clinical settings.

\textbf{SleepEDF dataset~\cite{kemp2000analysis}}:
\textcolor{black}{The SleepEDF database, specifically its expanded version (sleep-edfx), contains 197 whole-night PolySomnoGraphic (PSG) sleep recordings. For this study, we utilized the "Sleep Cassette (SC)" portion, comprising 153 recordings from a 1987-1991 study focused on age-related sleep effects in 78 healthy Caucasian individuals aged 25-101 years. Participants were not administered sleep-related medication, ensuring the data reflect natural sleep patterns. These home-recorded PSGs typically spanned about 20 hours each over subsequent day-night periods and included EEG (from Fpz-Cz and Pz-Oz locations, with Fpz-Cz used in this study), EOG (horizontal), and chin EMG signals, originally sampled at 100 Hz. Sleep stages were manually annotated by trained technicians in 30-second epochs according to the Rechtschaffen and Kales (R\&K) standard~\cite{RKManual}. The stages include Wake (W), REM (R), and Non-REM stages (N1, N2, N3, where N3 and N4 from R\&K are often combined, and N1, N2, N3 are used in newer American Academy of Sleep Medicine standards~\cite{berry2012aasm}). For this study, focusing on the Fpz-Cz EEG channel, we segmented recordings into 3000-point windows. This processing yielded a total of 42,308 samples. Based on analyses of a comparable 20-subject subset of SleepEDF, the approximate distribution of these epochs is: \{Wake 19.6\%, N1 6.6\%, N2 42.1\%, N3 13.5\%, and REM 18.2\%\}. A subject-wise split is implemented, with 60\% of subjects allocated to the training set, 20\% to the validation set, and 20\% to the test set.}

\textbf{PTB-XL dataset~\cite{wagner2020ptb}}:
\textcolor{black}{The PTB-XL dataset is a large, comprehensive, and publicly available clinical electrocardiography resource, containing 21,799 10-second 12-lead ECGs from 18,869 patients (subjects). The patient cohort is balanced by sex (52\% male, 48\% female) and covers a wide age range (0-95 years, median 62 years). Recordings were collected with Schiller AG devices between October 1989 and June 1996. The ECGs are annotated by up to two cardiologists, assigning potentially multiple statements from 71 SCP-ECG standard diagnostic, form, and rhythm statements. For diagnostic purposes, these are hierarchically organized into five coarse superclasses. The distribution of these superclasses across the 21,799 records is: Normal ECG (NORM): 9,514; Myocardial Infarction (MI): 5,469; ST/T Change (STTC): 5,235; Conduction Disturbance (CD): 4,898; and Hypertrophy (HYP): 2,649. To ensure consistency for our study, only subjects with uniform diagnoses across all trials were included, reducing the dataset to 17,596 subjects. The original 500Hz recordings were downsampled to 250Hz for this study, and standard scaling was applied. Each 10-second recording was segmented into non-overlapping 5-second samples, yielding a total of 57,492 samples. A subject-independent split, respecting patient assignments to folds as recommended by the dataset providers, is used to ensure robust evaluation, with 60\% of subjects assigned to the training set, 20\% to the validation set, and 20\% to the test set.}

\textcolor{black}{For experiments involving LLMs, ChatGPT-4.0 (through the OpenAI GUI \url{https://chatgpt.com/}), the locally deployed Qwen2-VL-7B~\cite{wang2024qwen2} and 
MiMo-VL-7B-RL~\cite{coreteam2025mimovl} were used in their frozen, pre-trained state. \ConMIL's outputs (set-valued predictions and interpretations) were provided as input to these LLMs to guide their visual inspection and decision-making processes.} All implementations are carried out in PyTorch, and experiments were conducted on two NVIDIA RTX 4090 GPUs. The complete implementation of \ConMIL{}, including model training configurations, calibration procedures, and demos for LLM interactions, is publicly available on GitHub\footnote{\url{https://github.com/HuayuLiArizona/Conformalized-Multiple-Instance-Learning-For-MedTS}}.

\subsection{Trust but Verify: LLM Clinical Reasoning with \ConMIL{} Decision Rules}
\label{sec: trust_but_verify}
\begin{figure}
\centering
\includegraphics[width=0.95\textwidth]{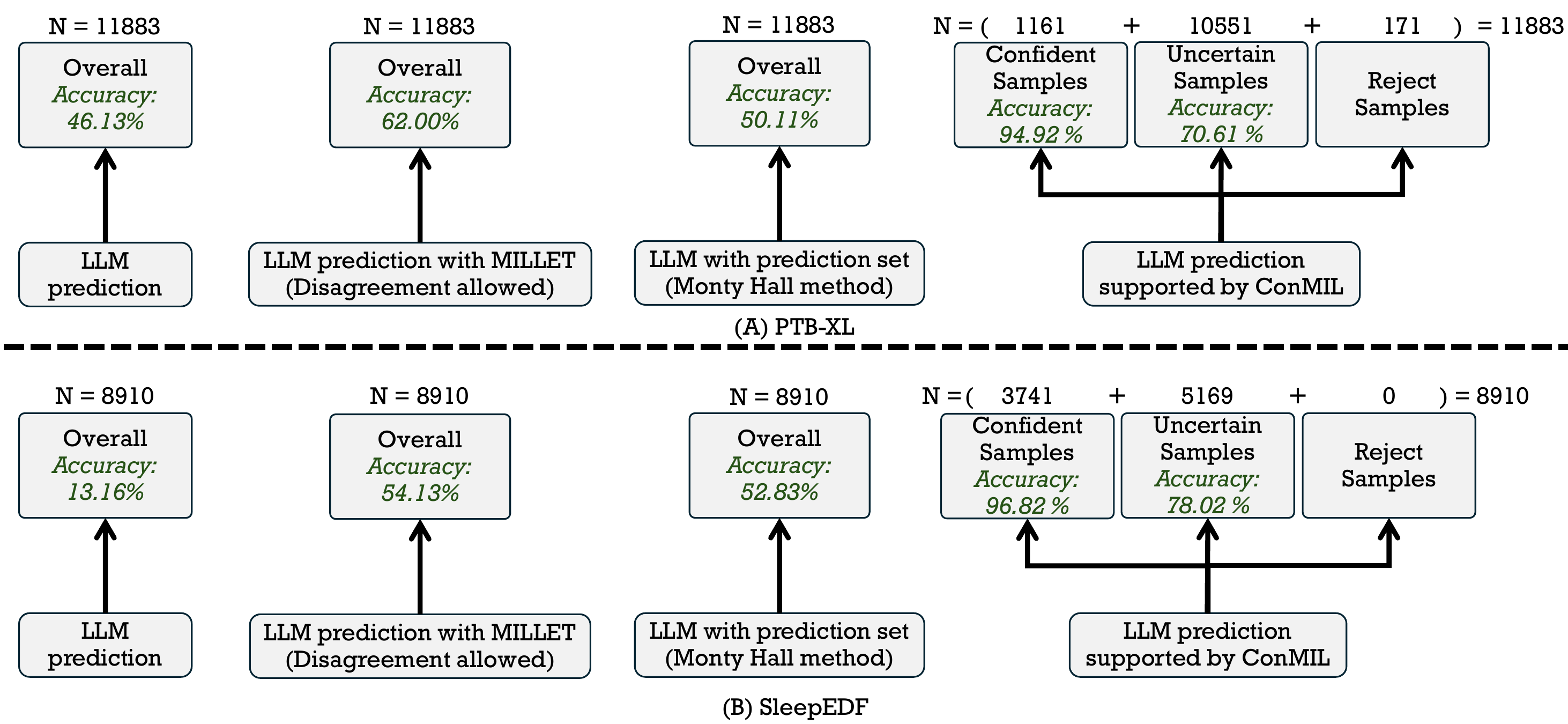}
\caption{\ConMIL{} on \underline{Qwen2-VL-7B}'s predictions for PTB-XL (A) and SleepEDF (B) at $\alpha=0.05$. Qwen2-VL-7B, supported by \ConMIL{}, achieves 94.92\% and 96.82\% accuracy for confident samples and 70.61\% and 78.02\% for uncertain samples, with 171 and 0 samples rejected, respectively. \textcolor{black}{This performance is notably higher than the LLM operating alone, with MILLET support, or with the Monty Hall method, highlighting \ConMIL’s effectiveness in improving both accuracy and reliability through its combination of uncertainty quantification and per-class interpretability. N: number of total samples.}}
\label{fig:qwen}
\end{figure}

\begin{figure}
\centering
\includegraphics[width=0.95\textwidth]{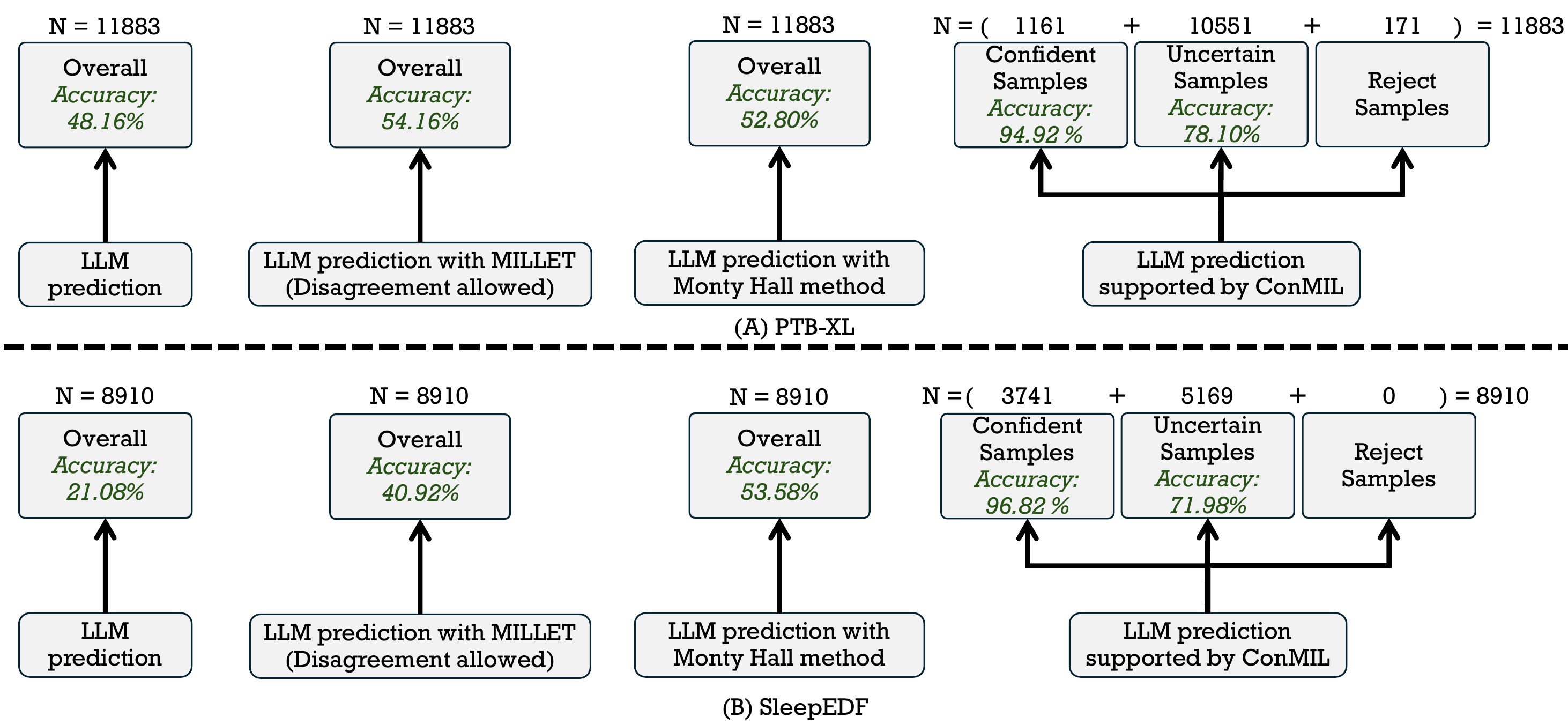}
\caption{\textcolor{black}{\ConMIL{} on \underline{MiMo-VL-7B-RL}'s predictions for PTB-XL (A) and SleepEDF (B) at $\alpha=0.05$. The framework shows consistent benefits with a different LLM. Supported by \ConMIL{}, MiMo-VL-7B-RL attains an accuracy of 94.92\% on confident samples for PTB-XL and 96.82\% for SleepEDF. Accuracy on uncertain samples is 78.10\% (PTB-XL) and 71.98\% (SleepEDF). The number of rejected samples (171 for PTB-XL, 0 for SleepEDF) remains consistent with the conformal calibration. \ConMIL{}'s full support again proves superior to baseline LLM performance and alternative methods like MILLET support or the Monty Hall method, underscoring its robust ability to improve clinical reasoning through confidence-based stratification. N: number of total samples.}}
\label{fig:MIMO}
\end{figure}

We evaluate how \ConMIL{} enhances clinical decision-making capabilities of LLMs, specifically Qwen2-VL-7B \textcolor{black}{and MiMo-VL-7B-RL}, using the full PTB-XL (11883 samples) and SleepEDF (8910 samples) test sets. \textcolor{black}{We compare the performance of these LLMs across four distinct settings: (1) the LLMs operate independently, analyzing only the raw visual plot of the physiological signal; (2) Given the top-1 prediction from the MILLET model and its interpretation, the LLMs are permitted to override the prediction based on independent reasoning. (3) the LLMs are given only the set-valued prediction from conformal prediction and must choose the most likely option from that narrowed-down set, without any accompanying per-class interpretations (Monty Hall method~\cite{vishwakarma2024monty}); (4) the LLMs are fully augmented by \ConMIL's framework, receiving both the set-valued prediction and the per-class interpretability visualizations for each candidate in the set, as detailed in our proposed decision framework. For all settings involving prediction sets, conformal calibration was performed at a confidence level of $\alpha=0.05$. The results are summarized in Figure~\ref{fig:qwen} and Figure~\ref{fig:MIMO}.}

\paragraph{ConMIL Improves Accuracy via Confidence-Based Stratification.}
\textcolor{black}{As shown in the experimental results, \ConMIL’s framework significantly improves the performance of both LLMs by stratifying predictions into confident (singleton prediction set), uncertain (multi-value prediction set), and rejectable categories. For Qwen2-VL-7B, its stand-alone accuracy on the PTB-XL dataset was 46.13\%. With \ConMIL's support, this performance was stratified into a 94.92\% accuracy on the 1161 confident samples and 70.61\% on the 10551 uncertain samples, with 171 instances rejected due to low confidence. On the SleepEDF dataset, its stand-alone performance of 13.16\% was dramatically improved with ConMIL, achieving 96.82\% accuracy on the 3741 confident samples and 78.02\% on the 5169 uncertain ones, with no samples rejected. Similarly, MiMo-VL-7B-RL showed substantial gains. Its stand-alone accuracy of 48.16\% on PTB-XL was stratified by ConMIL into 94.92\% accuracy for confident samples and 78.10\% for uncertain samples. On SleepEDF, its baseline accuracy of 21.08\% was elevated to 96.82\% for confident samples and 71.98\% for uncertain samples. This demonstrates ConMIL's consistent ability to enable different LLMs to deliver high accuracy where appropriate while safely isolating cases that require more nuanced reasoning.}

\paragraph{\textcolor{black}{Comparative Analysis with Alternative Support Methods}}
\textcolor{black}{When compared to other methods of assisting the LLMs, \ConMIL's comprehensive framework is proved to be superior. Providing the LLM with MILLET's top-1 prediction (while allowing disagreement) yields accuracies between 40.92\% and 62.00\% across the models and datasets. While this is an improvement over stand-alone performance, the performance is significantly lower than the accuracy achieved on \ConMIL's confident samples. This approach is limited by MILLET offering interpretability only for its single prediction, lacking the ability to quantify uncertainty or provide reasoning for other potential diagnoses. Simply providing the LLM with the narrowed-down set of options from conformal prediction, without \ConMIL's per-class interpretations, improves accuracy to a range of 50.11\% to 53.58\%. This shows that uncertainty quantification alone is beneficial, but this method's performance is notably weaker than \ConMIL's full support, especially on the uncertain samples. This highlights that the interpretability layer provided by \ConMIL{} is critical for helping the LLM reason effectively when faced with multiple possibilities.}

\paragraph{\textcolor{black}{The Synergy of Interpretability and Uncertainty Quantification}}
\textcolor{black}{These results reveal that \ConMIL{} plays a fundamentally different and more effective role in supporting LLM-based clinical decision-making than alternative approaches. Unlike conventional models that provide a single-point prediction (like MILLET), methods that only provide an uninterpreted set of options (like the Monty Hall method), \ConMIL{} produces confidence-calibrated, set-valued outputs where each candidate in the set is accompanied by its own interpretable explanation.}

\textcolor{black}{This dual provision of calibrated uncertainty and per-class interpretability is what distinguishes \ConMIL{}. When the prediction set contains a single label, it reflects high, verifiable confidence. When multiple labels are included, \ConMIL{} offers the per-class interpretability necessary for the LLM to reason through each possibilities and make an informed decision. \ConMIL{}’s ability to combine both perspectives, providing interpretable support for every class in a confidence-calibrated prediction set is the key to unlocking more accurate, reliable, and transparent LLM-driven clinical reasoning. We also assess the validity of \ConMIL’s conformal prediction framework by examining its ability to provide per-class explanation and achieve the desired coverage guarantees across a range of confidence levels in Supplementary Materials A.}

\subsection{Case Studies: How \ConMIL{} Enhances LLM-Driven Clinical Reasoning}
\label{sec: Case studies}
To further evaluate the practical value of \ConMIL{}, we conducted case studies on PTB-XL and SleepEDF datasets. \textcolor{black}{For the primary qualitative case studies in the main text, we utilized ChatGPT-4.0 due to its established strength in generating exceptionally detailed, narrative-style diagnostic explanations compare to Qwen2-VL-7B. This capability makes it an ideal subject for clearly illustrating the step-by-step impact of \ConMIL's evidence-based guidance on the LLM's reasoning process. To ensure a comprehensive analysis, we provide an example in Supplementary Materials B (Figure S2) for showcasing how to prompt the newest reasoning model MiMo-VL-7B-RL to get structured response. Parallel interviews showcasing the distinct reasoning of MiMo-VL-7B-RL and Qwen2-VL-7B are also provided in Supplementary Materials C.} We examined how ChatGPT4.0 performs under varying levels of interpretability and predictive support. Prior studies have explored the applications of LLMs in cardiovascular~\cite{gunay2024accuracy, zhu2024multimodal, sarraju2023appropriateness} and sleep medicine~\cite{bilal2024enhancing, nakari2024sleep, sano2024exploration}, leveraging their strong visual-language reasoning abilities. However, these models often fall short in domain-specific precision and reliability. Our case studies demonstrate that \ConMIL{} significantly enhances LLMs' diagnostic accuracy by narrowing prediction scope and providing actionable visual explanations.

To systematically explore this enhancement, we selected samples where the top-1 prediction was incorrect, but the set-valued predictions were still included the true label. We designed four progressive interaction settings:
\textcolor{black}{
\begin{enumerate}
    \item \textbf{LLM Stand-Alone:} The model independently analyzes raw input (EEG or ECG) without any external guidance. This simulates a baseline scenario where the LLM must rely solely on its pre-trained knowledge and reasoning capabilities.
    \item \textbf{LLM with Conventional Model Output:} The LLM is provided only with the top-1 prediction from a traditional model (e.g., “REM” or “Myocardial Infarction”), but without any accompanying interpretability. This setting simulates black-box guidance, where the LLM receives a definitive but unexplained recommendation.
    \item \textbf{LLM with \ConMIL{} Set-Valued Prediction:}  The model receives a set of candidate diagnoses (e.g., “REM or N2”) and is tasked with selecting the most likely class. This simulates a scenario where the LLM reasons over a narrowed set of options with explicit uncertainty information, enabling more focused and uncertainty-aware decision-making like the Monty Hall method~\cite{vishwakarma2024monty}.
    \item \textbf{LLM with \ConMIL{} Full Support:} The LLM receives both the set-valued prediction and per-class interpretability visualizations. This allows the model to make an informed choice based not only on the narrowed options but also on feature-level reasoning, simulating a scenario with maximum transparency and support from \ConMIL{}.
\end{enumerate}
}
These settings enable a controlled assessment of how interpretability and uncertainty quantification influence LLM decision-making. 

\begin{figure}
\centering
\includegraphics[width=0.99\textwidth]{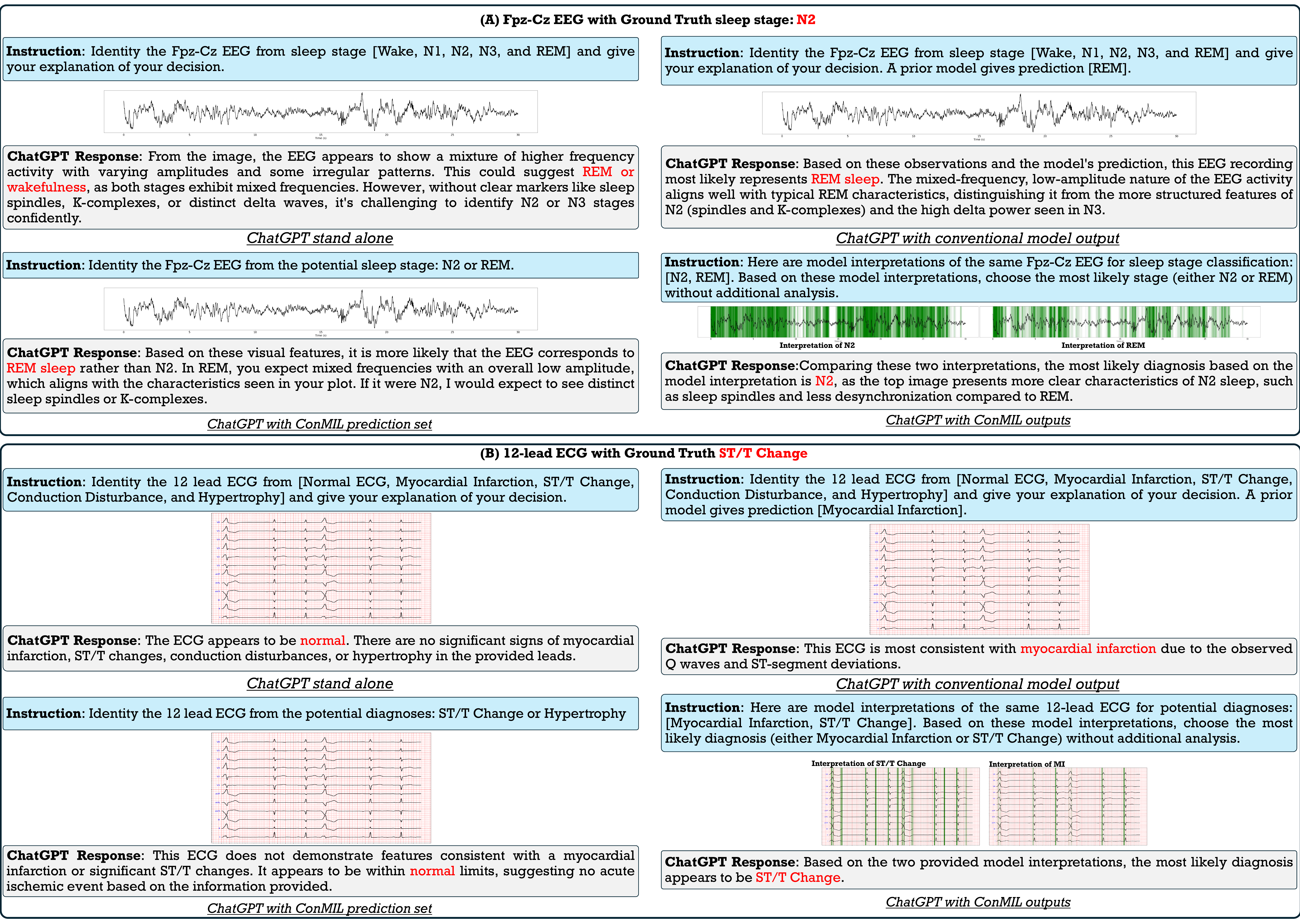}
\caption{Case studies demonstrating how \ConMIL{} enhances ChatGPT4.0’s capabilities in sleep stage classification and arrhythmia classification. \textbf{(A) Sleep Stage Classification}: ChatGPT4.0 correctly identifies the N2 stage only when supported by \ConMIL{}'s set-valued prediction and per-class interpretability. \textbf{(B) Arrhythmia Classification}: For a 12-lead ECG labeled as ST/T Change, ChatGPT4.0 achieves the correct diagnosis only with \ConMIL{}’s interpretability and uncertainty-aware support. These results highlight the critical role of \ConMIL{} in improving diagnostic accuracy via guided reasoning.}
\label{fig:case-studies}
\end{figure}

\paragraph{Case Study 1: Sleep Stage Classification}
We evaluated an Fpz-Cz EEG recording labeled as N2. As shown in Figure~\ref{fig:case-studies} (A), ChatGPT4.0, when unaided, failed to identify the correct stage, misclassifying it as REM or wakefulness due to mixed-frequency patterns. Provided with a top-1 prediction from a conventional model (REM), ChatGPT4.0 echoed the incorrect output. Even with \ConMIL{}'s narrowed set of possible labels (N2, REM), the model still chose incorrectly. Only when given both the prediction set and per-class visualizations—highlighting spindles and reduced desynchronization associated with N2—did ChatGPT4.0 make the correct classification. This example underscores the necessity of class-specific interpretability for LLMs to disambiguate similar diagnostic classes.

\paragraph{Case Study 2: Arrhythmia Classification}
We then tested a 12-lead ECG labeled as ST/T Change. As seen in Figure~\ref{fig:case-studies} (B), ChatGPT4.0 initially misclassified the signal as normal. When prompted with a top-1 model prediction of “Myocardial Infarction,” the model incorrectly followed that path, citing Q waves and ST deviations. Even with the correct label included in a set-valued prediction (e.g., “ST/T Change or Hypertrophy”), the model failed to select it. However, when provided with \ConMIL{}'s interpretability visualization—highlighting relevant temporal regions—ChatGPT4.0 correctly diagnosed ST/T Change, demonstrating the decisive impact of visual cues.

\begin{table}
\caption{Comparison of ChatGPT4.0, Qwen2-VL-7B, and MiMo-VL-7B-RL diagnostic performance across 30 challenging cases (15 for sleep stage classification and 15 for cardiac condition classification) under four settings}
\label{tab:chatgpt}
\centering
\begin{adjustbox}{max width=1\columnwidth}
\begin{tabular}{|c|c|c|c|c|c|}
\hline
Dataset & LLM & Setting 1 & Setting 2 & Setting 3 & Setting 4 \\ \hline
\multirow{3}{*}{PTB-XL}  &ChatGPT4.0   & 0/15 (0\%)      &  0/15 (0\%)        &   9/15 (60\%)               & \boldres{14/15 (93.3\%)}        \\ 
                         &Qwen2-VL-7B & 0/15 (0\%)     &  0/15 (0\%)      &   3/15  (20\%)              & \boldres{13/15 (86.7\%)}        \\ 
                         &MiMo-VL-7B-RL  & 3/15 (20\%)     &  0/15 (0\%)      &   4/15  (26.7\%)              & \boldres{15/15 (100\%)}        \\ \hline
\multirow{3}{*}{SleepEDF}&ChatGPT4.0   & 3/15 (20\%)     &  4/15 (26.6\%)      &   4/15 (26.6\%)                & \boldres{13/15 (86.7\%)}      \\
                         &Qwen2-VL-7B & 2/15 (13.3\%)     &  0/15 (0\%)       &   4/15 (26.6\%)                & \boldres{12/15 (80\%)}      \\ 
                         &MiMo-VL-7B-RL & 2/15 (13.3\%)     &  0/15 (0\%)       &   3/15 (20\%)                & \boldres{12/15 (80\%)}      \\ \hline
\multirow{3}{*}{Total}   &ChatGPT4.0   & 3/30 (10\%)     &  4/30 (13\%)     &   13/30 (43.3\%)              & \boldres{27/30 (90\%)}     \\ 
                         &Qwen2-VL-7B & 2/30 (6.7\%)     &  0/30  (0\%)       &   7/30 (23.3\%)               & \boldres{25/30 (83.3\%)}     \\ 
                         &MiMo-VL-7B-RL & 5/30 (16.7\%)     &  0/30  (0\%)       &   7/30 (23.3\%)               & \boldres{27/30 (90\%)}     \\ \hline
\end{tabular}
\end{adjustbox}
\end{table}

\paragraph{Quantitative Evaluation}
\textcolor{black}{To further validate these findings, we conducted a quantitative analysis on 30 randomly selected challenging cases—15 from sleep staging and 15 from arrhythmia classification. In each case, the top-1 prediction was incorrect, but the correct label was present within the prediction set. As shown in Table~\ref{tab:chatgpt}, ChatGPT4.0, Qwen2-VL-7B, and MiMo-VL-7B-RL all showed substantial improvements when supported by \ConMIL{}. ChatGPT4.0 achieved 27 correct diagnoses with full \ConMIL{} support, compared to only 3 in the stand-alone setting. Similarly, Qwen2-VL-7B improved from 2 to 25 correct diagnoses and MiMo-VL-7B-RL improved from 5 to 27. These results confirm that \ConMIL{} meaningfully improves diagnostic reasoning through confidence-aware and interpretable guidance.}

\subsection{From Ambiguity to Insight: Interactive Reasoning with LLMs and \ConMIL{}}
\label{appendix:Going-deeper}

\begin{figure*}
\centering
\includegraphics[width=0.99\linewidth]{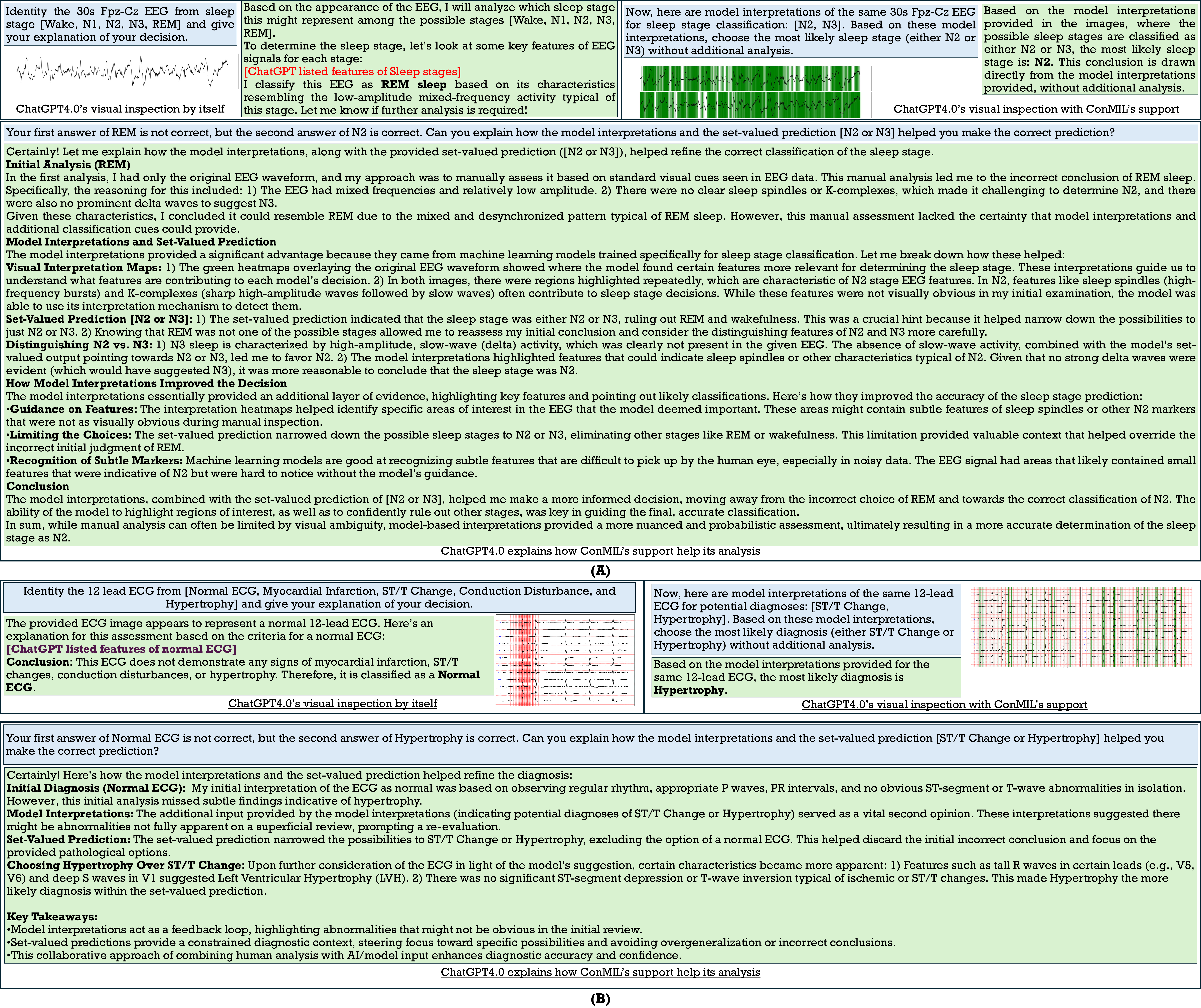}
\caption{“Interview” of ChatGPT4.0 on EEG sleep stage classification (A) and ECG arrhythmia classification (B), illustrating how \ConMIL{} refines diagnostic reasoning. \ConMIL{}'s interpretability and set-valued predictions guide ChatGPT4.0 by narrowing potential diagnoses and highlighting overlooked clinical features (e.g., sleep spindles in EEG or hypertrophy waveforms in ECG). Green boxes show ChatGPT4.0’s responses; blue boxes show the instructions.}
\label{fig:chatgpt-interview}
\end{figure*}

Figure~\ref{fig:chatgpt-interview} presents an interactive “interview” with ChatGPT4.0, highlighting how \ConMIL{} enhances visual inspection and clinical decision-making across two tasks: EEG-based sleep stage classification and ECG-based arrhythmia diagnosis. The integration of \ConMIL{}’s interpretability overlays and set-valued predictions reveals its impact on guiding LLMs toward more accurate and clinically grounded conclusions. \textcolor{black}{We also provide interview of Qwen2-VL-7B, and MiMo-VL-7B-RL in Supplementary Materials C.}

\paragraph{Initial Assessment and Challenges.}
In the first task, ChatGPT4.0 was asked to classify a 30-second Fpz-Cz EEG segment. Initially, it incorrectly labeled the stage as REM sleep, citing low-amplitude mixed-frequency activity. However, it overlooked critical N2 indicators—such as sleep spindles and K-complexes—leading to uncertainty. As it noted: “I classify this EEG as REM sleep based on its resemblance to low-amplitude mixed-frequency activity typical of that stage.” Similarly, in the ECG-based task, ChatGPT4.0 initially classified a 12-lead ECG as normal, failing to identify signs of hypertrophy. The model focused on generalized waveform patterns and missed deeper feature-specific cues critical to diagnosis.

\paragraph{Refinement Enabled by \ConMIL{}.}
The introduction of \ConMIL{} fundamentally changed the LLM’s reasoning. By providing structured, class-specific visual cues and reducing diagnostic ambiguity through set-valued predictions, ChatGPT4.0 was able to revise its initial decisions:
\begin{itemize}
    \item \textbf{Sleep Stage Classification:} \ConMIL{} highlighted characteristic sleep spindles and K-complexes, helping to differentiate N2 from REM or wakefulness. The prediction set was also narrowed to N2 and N3, guiding ChatGPT4.0 to correctly choose N2. As the model reflected: “The set-valued prediction ruled out REM and wakefulness. This crucial hint helped reassess my initial conclusion and focus on N2-specific features.”
    \item \textbf{Arrhythmia Diagnosis:} For the ECG, \ConMIL{} identified wave progression abnormalities, such as deep S waves and prominent R-wave growth—typical of hypertrophy. The exclusion of “normal” from the prediction set prompted ChatGPT4.0 to re-evaluate its assessment: “The set-valued prediction excluded a normal ECG, helping me recognize the hypertrophy-specific wave patterns, such as deep S waves in lead V1 and exaggerated R-wave progression in V5.”
\end{itemize}

\paragraph{Interpretability and Clinical Trust.}
Beyond improving accuracy, \ConMIL{} added a crucial layer of interpretability. ChatGPT4.0 explicitly acknowledged the value of visual cues in clarifying subtle features, such as N2 spindles, that are often hard to detect in noisy signals. As the model stated: “Machine learning models are adept at recognizing subtle features that can be difficult for the human eye to discern.” By surfacing these cues through heatmaps, \ConMIL{} supports traceability and transparency—key prerequisites for trust in clinical AI tools.

\paragraph{Synergy Between Domain Expertise and General Reasoning.}
These results highlight the complementary strengths of \ConMIL{} and LLMs. While \ConMIL{} excels at extracting and highlighting domain-specific signal patterns, LLMs like ChatGPT leverage general medical knowledge to contextualize those cues. For instance, \ConMIL{}'s detection of spindles helped ChatGPT4.0 validate an N2 classification. Likewise, hypertrophy-indicative waveforms led the model to articulate relevant clinical implications. This cooperative dynamic enables a level of diagnostic precision that neither system could achieve alone—especially in ambiguous or borderline cases.

\section{Discussion}
Through rigorous experimentation across two critical clinical tasks, sleep stage classification and arrhythmia detection. \ConMIL{} demonstrated its ability to substantially enhance the diagnostic precision of LLMs such as Qwen2-VL-7B,MiMo-VL-7B-RL and ChatGPT4.0. For example, Qwen2-VL-7B and MiMo-VL-7B-RL supported by \ConMIL{} achieved significantly higher accuracy on both the PTB-XL and SleepEDF datasets, while exhibiting more cautious risk stratification through set-valued predictions and rejection options.

Importantly, case studies revealed that ChatGPT4.0, when aided by \ConMIL{}'s set-valued outputs and per-class interpretability, was often able to arrive at the correct diagnosis—even in cases where \ConMIL{}'s own top-1 prediction was incorrect. This addresses a fundamental limitation of conventional SSMs, which typically offer single-point predictions without the flexibility for nuanced diagnostic reasoning or fail-safe mechanisms. By augmenting LLMs with interpretable and calibrated support, \ConMIL{} enables task-specific precision to complement the broad contextual reasoning of general-purpose models.

The “interview” protocol we conducted with ChatGPT4.0 further illustrates how \ConMIL{} supports clinical reasoning. This structured interaction loop encouraged the LLM to evaluate multiple diagnostic hypotheses, guided by visual evidence and confidence-aware predictions. Our findings align with recent trends in augmenting LLMs through external tools such as chain-of-thought prompting~\cite{wei2022chain} and retrieval-augmented generation~\cite{lewis2020retrieval}.
Not only does this illustrates \ConMIL{}’s utility as a decision-support module, but also it suggests new ways of scaffolding LLM behavior, such as ``interview'' based reasoning for robust and explainable clinical workflows.

\subsection{Real-World Clinical Impacts}
\textcolor{black}{Our findings highlight \ConMIL's transformative potential in real-world healthcare settings. By providing per-class explanations and calibrated uncertainty, \ConMIL{} addresses the "black box" problem in medical AI, fostering clinician adoptions and enabling more effective human-AI collaboration. This approach is crucial in high-stakes settings where interpretability and reliability are paramount.}

\textcolor{black}{The practical benefits are evident across various specialties. For example, in cardiology, LLMs supported by \ConMIL{} can identify ST/T abnormalities and other critical cardiac conditions like hypertrophy with greater accuracy and interpretability. \ConMIL{} facilitates this by directing the LLM's attention to specific indicative waveforms within ECGs. This enables more timely triage and intervention, potentially preventing critical escalations such as myocardial infarction. These capabilities are particularly impactful in remote or telehealth scenarios, where \ConMIL{}-enhanced LLMs can provide rapid and reliable interpretations, extending expert-level assistance to underserved regions and potentially saving lives. The framework's ability to flag ambiguous cases for clinicians review further ensures patient's safety in such settings. Similarly, in sleep medicine, the proposed framework enables more efficient and interpretable sleep scoring. The success in these areas suggests potential for generalizing \ConMIL{} to other diagnostic signal interpretation tasks, such as intraoperative neurophysiological monitoring or fetal heart rate monitoring, motivating further validation in these diverse clinical areas.}

\textcolor{black}{From a systems perspective, \ConMIL{} redefines LLM augmentation by positioning SSMs as providers of structured, interpretable, and confidence-calibrated evidence rather than just factual boosters. This fosters transparent, reject-aware, and clinically aligned AI decision-making. LLMs leverage \ConMIL{}'s outputs to generate actionable insights, guided by domain-specific expertises while maintaining their general reasoning capabilities. The "interview" protocol demonstrated with ChatGPT4.0 further suggests fertile ground for research into advanced LLM reasoning strategies and robust, explainable clinical workflows. Ultimately, \ConMIL{} lays the groundwork for scalable and trustworthy AI-assisted diagnostics that balance automation with essential human oversight, representing a key step towards safer and more effective patient-centered healthcare. Furthermore, by leveraging pre-trained LLMs augmented by specialized SSMs like \ConMIL{}, healthcare institutions may achieve robust AI capabilities more cost-effectively than resource-intensive finetuning of large models for every specific task.}

\subsection{Limitations}

While \ConMIL{} markedly improves LLMs’ capabilities in the visual inspection of physiological data, several limitations warrant consideration. First, this framework is tailored primarily for structured physiological signals such as ECG and EEG. However, real-world clinical decision-making often involves multimodal data sources, including unstructured clinical notes, imaging, and electronic health records (EHRs), which are not yet addressed in this work.

Second, \ConMIL{} is not explicitly designed to support structured clinical report generation. Although it enhances diagnostic reasoning and interpretability, it does not produce outputs aligned with documentation standards or clinical guidelines, an essential requirement for deployment in formal medical settings.

Third, in the context of broader medical time-series data, not limited to physiological signals—modalities such as accelerometry-based motion data. do not always lend themselves well to waveform-based visual inspection. For some modalities, alternative representations—such as spectrograms, embeddings, or frequency-domain features, may offer more meaningful insights. The current framework may therefore be limited in scope for these scenarios.

\subsection{Future Work}
Future research will seek to extend \ConMIL{} into truly multimodal diagnostic systems capable of jointly processing structured physiological signals and unstructured clinical narratives. While the current implementation focuses on waveform-based visual inspection of physiological signals such as ECG and EEG, real-world clinical reasoning often requires integrating contextual information, including patient history, physician notes, lab results, and imaging reports. Bridging this gap would allow for a more comprehensive understanding of the patient state and enable the AI system to support longitudinal, cross-modal reasoning that more closely mirrors the clinical diagnostic process.

Another important direction involves enhancing the LLM’s ability to generate structured outputs that align with established clinical guidelines, such as the American Academy of Sleep Medicine (AASM) scoring manual~\cite{berry2012aasm} for sleep staging or the American Heart Association (AHA) for cardiovascular diagnostics~\cite{joglar20242023}. This would increase the system's readiness for integration into clinical workflows by ensuring outputs are not only interpretable but also actionable and legally traceable. Incorporating templated or schema-aware report generation into the LLM prompting pipeline could enable automatic generation of structured assessments and plans, reducing clinician workload while preserving documentation quality.

To further expand its generalizability, \ConMIL{} should be adapted to support a broader spectrum of input representations beyond raw waveforms. For certain modalities or clinical use cases, alternative forms such as spectrograms, time-frequency decompositions, or learned latent embeddings may offer more informative or noise-robust features. Supporting these formats would allow \ConMIL{} to handle more diverse and potentially degraded input data, improving performance in real-world scenarios such as ambulatory monitoring or low-resource environments.

Furthermore, the "interview" protocol introduced in this study offers a promising foundation for more formalized chain-of-thought prompting strategies. By structuring LLM interactions to explicitly consider multiple hypotheses and iteratively evaluate model-generated interpretations, this approach can promote transparency and error detection. Future work could explore the design of dynamic dialogue agents that use \ConMIL{}’s interpretive signals as anchors in a reasoning loop—enabling the LLM to self-verify, revise, or defer decisions based on confidence and evidence. Such mechanisms could be instrumental in building AI systems that are not only diagnostically accurate but also self-aware of their uncertainty and reasoning limitations, paving the way toward safer and more collaborative human-AI decision-making in healthcare. \textcolor{black}{Finally, exploring parameter-efficient fine-tuning techniques like LoRA for the LLMs, informed by ConMIL's outputs, could be a promising avenue to further enhance their domain-specific reasoning capabilities without the need for full model retraining.}

\section{Conclusion}
This work represents a significant step forward in healthcare AI by demonstrating how SSMs and LLMs can be synergistically combined to enhance clinical decision-making—particularly in the context of physiological signal analysis. While the development of SSMs has plateaued in terms of raw accuracy, \ConMIL{} shows that these models retain immense value as interpretable, reliable, and task-specific components within broader AI systems. Rather than positioning SSMs and LLMs as competing paradigms, \ConMIL{} reframes SSMs as foundational decision-support modules that complement the reasoning capabilities of LLMs. This integration not only improves diagnostic accuracy but also addresses critical concerns around interpretability, uncertainty, and clinical trust. By offering per-class explanations and confidence-calibrated predictions, \ConMIL{} elevates the transparency and safety of LLM-driven diagnostics—hallmarks of responsible AI deployment in medicine. The successful fusion of \ConMIL{} with LLMs underscores the broader promise of hybrid AI architectures in healthcare. SSMs bring specialization, efficiency, and low computational overhead, while LLMs contribute generalizability, multimodal understanding, and contextual reasoning. Together, they form a robust and adaptable ecosystem capable of addressing the complexities of real-world clinical workflows. Looking ahead, the core principles of \ConMIL{} extend beyond physiological signal analysis. Its interpretability-driven, confidence-aware framework could be adapted to domains such as pathology, radiology, or genomics, where transparency is equally vital. Ultimately, this approach lays the groundwork for collaborative AI systems that are not only accurate, but also explainable, trustworthy, and seamlessly integrated into clinical practice—paving the way for safer, more effective, and patient-centered healthcare delivery.

\section{Declarations}
\subsection{Ethical Approval}
Not Applicable

\subsection{Funding}
This work was supported by a grant from the National Science Foundation (\#2052528).

\subsection{Data availability statements}
All data used in this paper are publicly available on PhysioNet (PTB-XL:\url{https://physionet.org/content/ptb-xl/1.0.0/}, SleepEDF:\url{https://physionet.org/content/sleep-edfx/1.0.0/}). 

\clearpage
\bibliography{ref}

\begin{thebibliography}{10}

\bibitem{shickel2017deep}
Benjamin Shickel, Patrick~James Tighe, Azra Bihorac, and Parisa Rashidi.
\newblock Deep ehr: a survey of recent advances in deep learning techniques for electronic health record (ehr) analysis.
\newblock {\em IEEE journal of biomedical and health informatics}, 22(5):1589--1604, 2017.

\bibitem{che2018recurrent}
Zhengping Che, Sanjay Purushotham, Kyunghyun Cho, David Sontag, and Yan Liu.
\newblock Recurrent neural networks for multivariate time series with missing values.
\newblock {\em Scientific reports}, 8(1):6085, 2018.

\bibitem{ismail2020inceptiontime}
Hassan Ismail~Fawaz, Benjamin Lucas, Germain Forestier, Charlotte Pelletier, Daniel~F Schmidt, Jonathan Weber, Geoffrey~I Webb, Lhassane Idoumghar, Pierre-Alain Muller, and Fran{\c{c}}ois Petitjean.
\newblock Inceptiontime: Finding alexnet for time series classification.
\newblock {\em Data Mining and Knowledge Discovery}, 34(6):1936--1962, 2020.

\bibitem{wang2024medformer}
Yihe Wang, Nan Huang, Taida Li, Yujun Yan, and Xiang Zhang.
\newblock Medformer: A multi-granularity patching transformer for medical time-series classification.
\newblock {\em arXiv preprint arXiv:2405.19363}, 2024.

\bibitem{chen2024timemil}
Xiwen Chen, Peijie Qiu, Wenhui Zhu, Huayu Li, Hao Wang, Aristeidis Sotiras, Yalin Wang, and Abolfazl Razi.
\newblock Time{MIL}: Advancing multivariate time series classification via a time-aware multiple instance learning.
\newblock In {\em Forty-first International Conference on Machine Learning}, 2024.

\bibitem{early2024inherently}
Joseph Early, Gavin Cheung, Kurt Cutajar, Hanting Xie, Jas Kandola, and Niall Twomey.
\newblock Inherently interpretable time series classification via multiple instance learning.
\newblock In {\em The Twelfth International Conference on Learning Representations}, 2024.

\bibitem{nori2023can}
Harsha Nori, Yin~Tat Lee, Sheng Zhang, Dean Carignan, Richard Edgar, Nicolo Fusi, Nicholas King, Jonathan Larson, Yuanzhi Li, Weishung Liu, et~al.
\newblock Can generalist foundation models outcompete special-purpose tuning? case study in medicine.
\newblock {\em arXiv preprint arXiv:2311.16452}, 2023.

\bibitem{singhal2023towards}
Karan Singhal, Tao Tu, Juraj Gottweis, Rory Sayres, Ellery Wulczyn, Le~Hou, Kevin Clark, Stephen Pfohl, Heather Cole-Lewis, Darlene Neal, et~al.
\newblock Towards expert-level medical question answering with large language models.
\newblock {\em arXiv preprint arXiv:2305.09617}, 2023.

\bibitem{nakari2024sleep}
Iko Nakari and Keiki Takadama.
\newblock Sleep stage estimation by introduction of sleep domain knowledge to ai: Towards personalized sleep counseling system with genai.
\newblock In {\em Proceedings of the AAAI Symposium Series}, volume~3, pages 368--373, 2024.

\bibitem{gunay2024accuracy}
Serkan G{\"u}nay, Ahmet {\"O}zt{\"u}rk, and Yavuz Yi{\u{g}}it.
\newblock The accuracy of gemini, gpt-4, and gpt-4o in ecg analysis: A comparison with cardiologists and emergency medicine specialists.
\newblock {\em The American journal of emergency medicine}, 84:68--73, 2024.

\bibitem{gunay2024comparison}
Serkan G{\"u}nay, Ahmet {\"O}zt{\"u}rk, Hakan {\"O}zerol, Yavuz Yi{\u{g}}it, and Ali~Kemal Erenler.
\newblock Comparison of emergency medicine specialist, cardiologist, and chat-gpt in electrocardiography assessment.
\newblock {\em The American Journal of Emergency Medicine}, 80:51--60, 2024.

\bibitem{zhu2024multimodal}
Lingxuan Zhu, Weiming Mou, Keren Wu, Yancheng Lai, Anqi Lin, Tao Yang, Jian Zhang, and Peng Luo.
\newblock Multimodal chatgpt-4v for electrocardiogram interpretation: Promise and limitations.
\newblock {\em Journal of Medical Internet Research}, 26:e54607, 2024.

\bibitem{sano2024exploration}
Akane Sano, Judith Amores, and Mary Czerwinski.
\newblock Exploration of llms, eeg, and behavioral data to measure and support attention and sleep.
\newblock {\em arXiv preprint arXiv:2408.07822}, 2024.

\bibitem{zaboli2025exploring}
Arian Zaboli, Francesco Brigo, Marta Ziller, Magdalena Massar, Marta Parodi, Gabriele Magnarelli, Gloria Brigiari, and Gianni Turcato.
\newblock Exploring chatgpt's potential in ecg interpretation and outcome prediction in emergency department.
\newblock {\em The American Journal of Emergency Medicine}, 88:7--11, 2025.

\bibitem{vovk2005algorithmic}
Vladimir Vovk, Alexander Gammerman, and Glenn Shafer.
\newblock {\em Algorithmic learning in a random world}, volume~29.
\newblock Springer, 2005.

\bibitem{ansari2023deep}
Yaqoob Ansari, Omar Mourad, Khalid Qaraqe, and Erchin Serpedin.
\newblock Deep learning for ecg arrhythmia detection and classification: an overview of progress for period 2017--2023.
\newblock {\em Frontiers in Physiology}, 14:1246746, 2023.

\bibitem{kuhlmann2018seizure}
Levin Kuhlmann, Klaus Lehnertz, Mark~P Richardson, Bj{\"o}rn Schelter, and Hitten~P Zaveri.
\newblock Seizure prediction—ready for a new era.
\newblock {\em Nature Reviews Neurology}, 14(10):618--630, 2018.

\bibitem{yue2024research}
Huijun Yue, Zhuqi Chen, Wenbin Guo, Lin Sun, Yidan Dai, Yiming Wang, Wenjun Ma, Xiaomao Fan, Weiping Wen, and Wenbin Lei.
\newblock Research and application of deep learning-based sleep staging: Data, modeling, validation, and clinical practice.
\newblock {\em Sleep Medicine Reviews}, page 101897, 2024.

\bibitem{dai2013atrial}
Huhe Dai, Shouda Jiang, and Ye~Li.
\newblock Atrial activity extraction from single lead ecg recordings: Evaluation of two novel methods.
\newblock {\em Computers in biology and medicine}, 43(3):176--183, 2013.

\bibitem{faust2015wavelet}
Oliver Faust, U~Rajendra Acharya, Hojjat Adeli, and Amir Adeli.
\newblock Wavelet-based eeg processing for computer-aided seizure detection and epilepsy diagnosis.
\newblock {\em Seizure}, 26:56--64, 2015.

\bibitem{avanzato2020automatic}
Roberta Avanzato and Francesco Beritelli.
\newblock Automatic ecg diagnosis using convolutional neural network.
\newblock {\em Electronics}, 9(6):951, 2020.

\bibitem{luo2024moderntcn}
Donghao Luo and Xue Wang.
\newblock Moderntcn: A modern pure convolution structure for general time series analysis.
\newblock In {\em The Twelfth International Conference on Learning Representations}, 2024.

\bibitem{wang2024contrast}
Yihe Wang, Yu~Han, Haishuai Wang, and Xiang Zhang.
\newblock Contrast everything: A hierarchical contrastive framework for medical time-series.
\newblock {\em Advances in Neural Information Processing Systems}, 36, 2024.

\bibitem{vaswani2017attention}
A~Vaswani.
\newblock Attention is all you need.
\newblock {\em Advances in Neural Information Processing Systems}, 2017.

\bibitem{eldele2021attention}
Emadeldeen Eldele, Zhenghua Chen, Chengyu Liu, Min Wu, Chee-Keong Kwoh, Xiaoli Li, and Cuntai Guan.
\newblock An attention-based deep learning approach for sleep stage classification with single-channel eeg.
\newblock {\em IEEE Transactions on Neural Systems and Rehabilitation Engineering}, 29:809--818, 2021.

\bibitem{li2024mts}
Huayu Li, Ana~S Carreon-Rascon, Xiwen Chen, Geng Yuan, and Ao~Li.
\newblock Mts-lof: medical time-series representation learning via occlusion-invariant features.
\newblock {\em IEEE Journal of Biomedical and Health Informatics}, 2024.

\bibitem{dietterich1997solving}
Thomas~G Dietterich, Richard~H Lathrop, and Tom{\'a}s Lozano-P{\'e}rez.
\newblock Solving the multiple instance problem with axis-parallel rectangles.
\newblock {\em Artificial intelligence}, 89(1-2):31--71, 1997.

\bibitem{scott2017unified}
M~Scott, Lee Su-In, et~al.
\newblock A unified approach to interpreting model predictions.
\newblock {\em Advances in neural information processing systems}, 30:4765--4774, 2017.

\bibitem{ribeiro2016should}
Marco~Tulio Ribeiro, Sameer Singh, and Carlos Guestrin.
\newblock " why should i trust you?" explaining the predictions of any classifier.
\newblock In {\em Proceedings of the 22nd ACM SIGKDD international conference on knowledge discovery and data mining}, pages 1135--1144, 2016.

\bibitem{mackay1992practical}
David~JC MacKay.
\newblock A practical bayesian framework for backpropagation networks.
\newblock {\em Neural computation}, 4(3):448--472, 1992.

\bibitem{gal2016dropout}
Yarin Gal and Zoubin Ghahramani.
\newblock Dropout as a bayesian approximation: Representing model uncertainty in deep learning.
\newblock In {\em international conference on machine learning}, pages 1050--1059. PMLR, 2016.

\bibitem{vishwakarma2024monty}
Harit Vishwakarma, Alan Mishler, Thomas Cook, Niccolo Dalmasso, Natraj Raman, and Sumitra Ganesh.
\newblock Monty hall and optimized conformal prediction to improve decision-making with llms.
\newblock {\em arXiv preprint arXiv:2501.00555}, 2024.

\bibitem{ilse2018attention}
Maximilian Ilse, Jakub Tomczak, and Max Welling.
\newblock Attention-based deep multiple instance learning.
\newblock In {\em International conference on machine learning}, pages 2127--2136. PMLR, 2018.

\bibitem{shao2021transmil}
Zhuchen Shao, Hao Bian, Yang Chen, Yifeng Wang, Jian Zhang, Xiangyang Ji, et~al.
\newblock Transmil: Transformer based correlated multiple instance learning for whole slide image classification.
\newblock {\em Advances in neural information processing systems}, 34:2136--2147, 2021.

\bibitem{ramapuram2024theory}
Jason Ramapuram, Federico Danieli, Eeshan Dhekane, Floris Weers, Dan Busbridge, Pierre Ablin, Tatiana Likhomanenko, Jagrit Digani, Zijin Gu, Amitis Shidani, et~al.
\newblock Theory, analysis, and best practices for sigmoid self-attention.
\newblock {\em arXiv preprint arXiv:2409.04431}, 2024.

\bibitem{cover1999elements}
Thomas~M Cover.
\newblock {\em Elements of information theory}.
\newblock John Wiley \& Sons, 1999.

\bibitem{papadopoulos2002inductive}
Harris Papadopoulos, Kostas Proedrou, Volodya Vovk, and Alex Gammerman.
\newblock Inductive confidence machines for regression.
\newblock In {\em Machine learning: ECML 2002: 13th European conference on machine learning Helsinki, Finland, August 19--23, 2002 proceedings 13}, pages 345--356. Springer, 2002.

\bibitem{angelopoulos2022conformal}
Anastasios~N Angelopoulos, Stephen Bates, Adam Fisch, Lihua Lei, and Tal Schuster.
\newblock Conformal risk control.
\newblock {\em arXiv preprint arXiv:2208.02814}, 2022.

\bibitem{barber2023conformal}
Rina~Foygel Barber, Emmanuel~J Candes, Aaditya Ramdas, and Ryan~J Tibshirani.
\newblock Conformal prediction beyond exchangeability.
\newblock {\em The Annals of Statistics}, 51(2):816--845, 2023.

\bibitem{kemp2000analysis}
Bob Kemp, Aeilko~H Zwinderman, Bert Tuk, Hilbert~AC Kamphuisen, and Josefien~JL Oberye.
\newblock Analysis of a sleep-dependent neuronal feedback loop: the slow-wave microcontinuity of the eeg.
\newblock {\em IEEE Transactions on Biomedical Engineering}, 47(9):1185--1194, 2000.

\bibitem{wagner2020ptb}
Patrick Wagner, Nils Strodthoff, Ralf-Dieter Bousseljot, Dieter Kreiseler, Fatima~I Lunze, Wojciech Samek, and Tobias Schaeffter.
\newblock Ptb-xl, a large publicly available electrocardiography dataset.
\newblock {\em Scientific data}, 7(1):1--15, 2020.

\bibitem{RKManual}
Edward~A. Wolpert.
\newblock {A Manual of Standardized Terminology, Techniques and Scoring System for Sleep Stages of Human Subjects.}
\newblock {\em Archives of General Psychiatry}, 20(2):246--247, 02 1969.

\bibitem{berry2012aasm}
Richard~B Berry, Rita Brooks, Charlene~E Gamaldo, Susan~M Harding, Carole Marcus, Bradley~V Vaughn, et~al.
\newblock The aasm manual for the scoring of sleep and associated events.
\newblock {\em Rules, Terminology and Technical Specifications, Darien, Illinois, American Academy of Sleep Medicine}, 176(2012):7, 2012.

\bibitem{wang2024qwen2}
Peng Wang, Shuai Bai, Sinan Tan, Shijie Wang, Zhihao Fan, Jinze Bai, Keqin Chen, Xuejing Liu, Jialin Wang, Wenbin Ge, et~al.
\newblock Qwen2-vl: Enhancing vision-language model's perception of the world at any resolution.
\newblock {\em arXiv preprint arXiv:2409.12191}, 2024.

\bibitem{coreteam2025mimovl}
{Xiaomi LLM-Core Team}.
\newblock Mimo-vl technical report, 2025.

\bibitem{sarraju2023appropriateness}
Ashish Sarraju, Dennis Bruemmer, Erik Van~Iterson, Leslie Cho, Fatima Rodriguez, and Luke Laffin.
\newblock Appropriateness of cardiovascular disease prevention recommendations obtained from a popular online chat-based artificial intelligence model.
\newblock {\em Jama}, 329(10):842--844, 2023.

\bibitem{bilal2024enhancing}
Maham Bilal, Yumna Jamil, Dua Rana, and Hussain~Haider Shah.
\newblock Enhancing awareness and self-diagnosis of obstructive sleep apnea using ai-powered chatbots: the role of chatgpt in revolutionizing healthcare.
\newblock {\em Annals of Biomedical Engineering}, 52(2):136--138, 2024.

\bibitem{wei2022chain}
Jason Wei, Xuezhi Wang, Dale Schuurmans, Maarten Bosma, Fei Xia, Ed~Chi, Quoc~V Le, Denny Zhou, et~al.
\newblock Chain-of-thought prompting elicits reasoning in large language models.
\newblock {\em Advances in neural information processing systems}, 35:24824--24837, 2022.

\bibitem{lewis2020retrieval}
Patrick Lewis, Ethan Perez, Aleksandra Piktus, Fabio Petroni, Vladimir Karpukhin, Naman Goyal, Heinrich K{\"u}ttler, Mike Lewis, Wen-tau Yih, Tim Rockt{\"a}schel, et~al.
\newblock Retrieval-augmented generation for knowledge-intensive nlp tasks.
\newblock {\em Advances in Neural Information Processing Systems}, 33:9459--9474, 2020.

\bibitem{joglar20242023}
Jos{\'e}~A Joglar, Mina~K Chung, Anastasia~L Armbruster, Emelia~J Benjamin, Janice~Y Chyou, Edmond~M Cronin, Anita Deswal, Lee~L Eckhardt, Zachary~D Goldberger, Rakesh Gopinathannair, et~al.
\newblock 2023 acc/aha/accp/hrs guideline for the diagnosis and management of atrial fibrillation: a report of the american college of cardiology/american heart association joint committee on clinical practice guidelines.
\newblock {\em Journal of the American College of Cardiology}, 83(1):109--279, 2024.

\bibitem{li2021dual}
Bin Li, Yin Li, and Kevin~W Eliceiri.
\newblock Dual-stream multiple instance learning network for whole slide image classification with self-supervised contrastive learning.
\newblock In {\em Proceedings of the IEEE/CVF conference on computer vision and pattern recognition}, pages 14318--14328, 2021.

\bibitem{loshchilov2017decoupled}
I~Loshchilov.
\newblock Decoupled weight decay regularization.
\newblock {\em arXiv preprint arXiv:1711.05101}, 2017.

\bibitem{angelopoulos2021gentle}
Anastasios~N Angelopoulos and Stephen Bates.
\newblock A gentle introduction to conformal prediction and distribution-free uncertainty quantification.
\newblock {\em arXiv preprint arXiv:2107.07511}, 2021.

\bibitem{chen2023tsmixer}
Si-An Chen, Chun-Liang Li, Nate Yoder, Sercan~O Arik, and Tomas Pfister.
\newblock Tsmixer: An all-mlp architecture for time series forecasting.
\newblock {\em arXiv preprint arXiv:2303.06053}, 2023.

\end{thebibliography}
\bibliographystyle{unsrt}
\clearpage

\appendix
\setcounter{figure}{0}
\setcounter{table}{0}

\renewcommand{\thefigure}{A\arabic{figure}}
\renewcommand{\thetable}{A\arabic{table}}

\section{Supplementary Materials: Stagnation of SSMs and \ConMIL{} Conformal Coverage}
\label{app:stagnation}
\textbf{Baselines and experimental setup}: We compare the performance of \ConMIL{} against several baseline SSMs. These include conventional convolutional neural networks (CNNs) such as InceptionTime~\cite{ismail2020inceptiontime}, and recent advanced transformer architecture (MedFormer~\cite{wang2024medformer}). In addition, we evaluate multiple MIL-based models, including ABMIL~\cite{ilse2018attention} and DSMIL~\cite{li2021dual} which were originally designed for Whole Slide Imaging (WSI), and recently proposed MILLET~\cite{early2024inherently} and TimeMIL~\cite{chen2024timemil}, which are tailored for time series data. 

All models are trained using binary cross-entropy (BCE) loss and the AdamW optimizer~\cite{loshchilov2017decoupled}, with a fixed learning rate of 0.005, weight decay of 0.001, and $\beta_1=0.9,\beta_2=0.999$ for momentum adaptation. Training is conducted for 40 epochs using a batch size of 128, and the model is evaluated at each epoch. Early stopping with a patience of 5 epochs is used to prevent overfitting. Model selection is based on the highest F1 score achieved on the validation set. We report the mean and standard deviation over three independent runs to ensure robustness. To address potential performance degradation due to data splitting in the split-conformal prediction setting, we calibrate \ConMIL{} on the validation set, ensuring fair comparisons with baseline methods. The CRC implementation is adapted from the GitHub repository\footnote{\url{https://github.com/aangelopoulos/conformal-prediction}} of \cite{angelopoulos2021gentle}, with modifications to accommodate per-class FNR constraints.

As shown in Table~\ref{tab:milnet-performance}, recent models—including advanced transformer-based architectures such as MedFormer—yield only marginal improvements over earlier baselines like InceptionTime. \textcolor{black}{This trend of marginal gains underscores a broader stagnation in SSM development when success is measured solely by standalone predictive accuracy, motivating a fundamental shift in perspective. While our \ConMIL{} model, evaluated as a standalone SSM, achieves competitive performance and even outperforms other state-of-the-art baselines, our primary objective was not to secure another minor improvement in these metrics. Instead, we aimed to break from the loop of incremental advancements by directly addressing the inherent limitations—such as a lack of deep interpretability and calibrated uncertainty—that have traditionally prevented SSMs from being reliably integrated into more complex, intelligent clinical systems. Therefore, \ConMIL{} was designed to demonstrate how an SSM can be reimagined: transformed from an isolated predictor into a reliable, supportive component that provides actionable, evidence-based insights to a LLM. Its 'outperformance' is consequently measured by its success in these new dimensions: its ability to function as a trustworthy plug-in and enable a more sophisticated and accurate reasoning process within a larger AI framework.}

To illustrate \ConMIL{}’s interpretability, we compare its output to that of MILLET in Figure~\ref{fig:compare}. MILLET produces post-hoc interpretations that only explain the final prediction, whereas \ConMIL{} generates per-class interpretability maps. These maps highlight which segments of the time series contribute to each possible diagnosis, enabling more transparent and actionable insights for downstream decision-makers. 

Importantly, this architectural design also leads to reduced class-wise variability in the latent space. As formalized in Theorem 2, conditioning attention on per-class tokens lowers class-conditioned entropy relative to using a single global token. This theoretical result suggests that \ConMIL{} promotes tighter clustering of instances within each class, simplifying the decision boundaries and improving classification reliability. The per-class interpretability heatmaps shown in Figure~\ref{fig:compare} visually reinforce this effect.

\begin{table*}
    \caption{Performance comparison of baseline models based on several evaluation metrics: Accuracy, Precision, Recall, F1 Score, AUROC, and AUPRC. Models compared include MedFormer~\cite{wang2024medformer}, InceptionTime~\cite{ismail2020inceptiontime}, TSMixer~\cite{chen2023tsmixer}, ABMIL~\cite{ilse2018attention}, DSMIL~\cite{li2021dual}, MILLET~\cite{early2024inherently}, TimeMIL~\cite{chen2024timemil}, and \ConMIL{} (SSM Only). The table also highlights the interpretability provided by each model, distinguishing between models with no interpretability, partial interpretability, and per-class interpretability.}
    \label{tab:milnet-performance}
    \centering
    \begin{adjustbox}{max width=1.0\columnwidth}
    \begin{tabular}{c|c|c|cccccc}
    \toprule
    Datasets                                          & Interpretability  & Models    & Accuracy       & Precision      & Recall         & F1 score       & AUROC          & AUPRC             \\\midrule
    \multicolumn{1}{c|}{\multirow{6}{*}{PTBXL}}      & No    & MedFormer  & 72.87$\pm$0.23 & 64.14$\pm$0.42 & 60.60$\pm$0.46 & 62.02$\pm$0.37 & 89.66$\pm$0.13 & 66.39$\pm$0.22 \\
    \multicolumn{1}{c|}{}                            & No  &InceptionTime & 74.47$\pm$0.57 & 60.70$\pm$2.48 & 67.38$\pm$3.31 & 60.13$\pm$1.57 & 87.09$\pm$1.44 & 68.01$\pm$2.11   \\
    \multicolumn{1}{c|}{}                            & No  &TSMixer & 69.18$\pm$0.45 & 61.66$\pm$1.82 & 56.68$\pm$2.43 & 56.93$\pm$1.22 & 86.70$\pm$1.32 & 60.85$\pm$1.91   \\
    \multicolumn{1}{c|}{}                            & Partial    & DSMIL     & 74.32$\pm$0.66 & 63.21$\pm$1.31 & 66.48$\pm$1.27 & 61.21$\pm$1.27 & 86.81$\pm$0.46 & 67.33$\pm$0.63       \\
    \multicolumn{1}{c|}{}                            & Partial    & ABMIL     & \boldres{74.79$\pm$0.32} & 62.32$\pm$1.42 & \boldres{67.81$\pm$1.29} & 60.92$\pm$1.27 & 87.86$\pm$0.45 & \boldres{68.93$\pm$0.57}       \\
    \multicolumn{1}{c|}{}                            & Partial    & MILLET    & 74.55$\pm$0.43 & 62.19$\pm$1.04 & 68.22$\pm$1.51 & 60.40$\pm$0.55 & 87.64$\pm$0.34 & 68.77$\pm$0.78       \\
    \multicolumn{1}{c|}{}                            & Partial    & TimeMIL   & 73.88$\pm$0.88 & 60.65$\pm$3.04 & 64.73$\pm$3.00 & 59.93$\pm$1.99 & 85.80$\pm$1.52 & 66.05$\pm$2.24    \\
    \multicolumn{1}{c|}{}                            & \boldres{Per-class}    & \ConMIL{} (SSM Only)      & 74.28$\pm$0.22 & \boldres{66.01$\pm$0.97} & 61.17$\pm$0.72 & \boldres{62.12$\pm$0.48} & \boldres{89.92$\pm$0.16} & 67.42$\pm$0.33 \\\midrule
    \multicolumn{1}{c|}{\multirow{6}{*}{sleepEDF}}   & No    & MedFormer  & 82.77$\pm$0.86 & 71.44$\pm$0.92 & 74.65$\pm$0.56 & 71.12$\pm$0.76 & 93.74$\pm$0.43 & 79.89$\pm$0.66 \\
    \multicolumn{1}{c|}{}                            & No  &InceptionTime & 84.21$\pm$1.09 & 74.83$\pm$1.62 & 76.95$\pm$0.62 & 73.84$\pm$2.09 & 95.01$\pm$0.55 & 81.29$\pm$0.64    \\
    \multicolumn{1}{c|}{}                            & No  &TSMixer & 60.15$\pm$1.72 & 59.99$\pm$2.81 & 47.04$\pm$3.31 & 44.93$\pm$1.57 & 81.92$\pm$2.35 & 49.82$\pm$2.71   \\
    \multicolumn{1}{c|}{}                            & Partial    & DSMIL     & 84.91$\pm$0.41 & 75.01$\pm$0.64 & 77.23$\pm$0.80 & 74.41$\pm$0.68 & 95.26$\pm$0.19 & 81.61$\pm$0.24      \\
    \multicolumn{1}{c|}{}                            & Partial    & ABMIL     & 85.51$\pm$0.43 & 75.22$\pm$0.68 & 77.30$\pm$0.84 & 74.71$\pm$0.77 & 95.36$\pm$0.19 & 81.80$\pm$0.18      \\
    \multicolumn{1}{c|}{}                            & Partial    & MILLET    & 85.23$\pm$0.38 & 74.85$\pm$0.60 & 77.11$\pm$0.71 & 74.35$\pm$0.60 & 95.24$\pm$0.18 & 81.61$\pm$0.56      \\
    \multicolumn{1}{c|}{}                            & Partial    & TimeMIL   & 84.74$\pm$0.80 & 75.82$\pm$0.98 & \boldres{77.72$\pm$0.74} & 74.87$\pm$1.25 & 95.20$\pm$0.40 & 81.69$\pm$0.35  \\
    \multicolumn{1}{c|}{}                            & \boldres{Per-class}    & \ConMIL{} (SSM Only)      & \boldres{85.82$\pm$0.37} & \boldres{78.06$\pm$0.27} & 75.65$\pm$0.32 & \boldres{76.78$\pm$0.12} & \boldres{96.33$\pm$0.24} & \boldres{81.91$\pm$0.23} \\\bottomrule 
    \end{tabular}
    \end{adjustbox}
\end{table*}

\begin{figure}[H]
    \centering
    \includegraphics[width=0.98\linewidth]{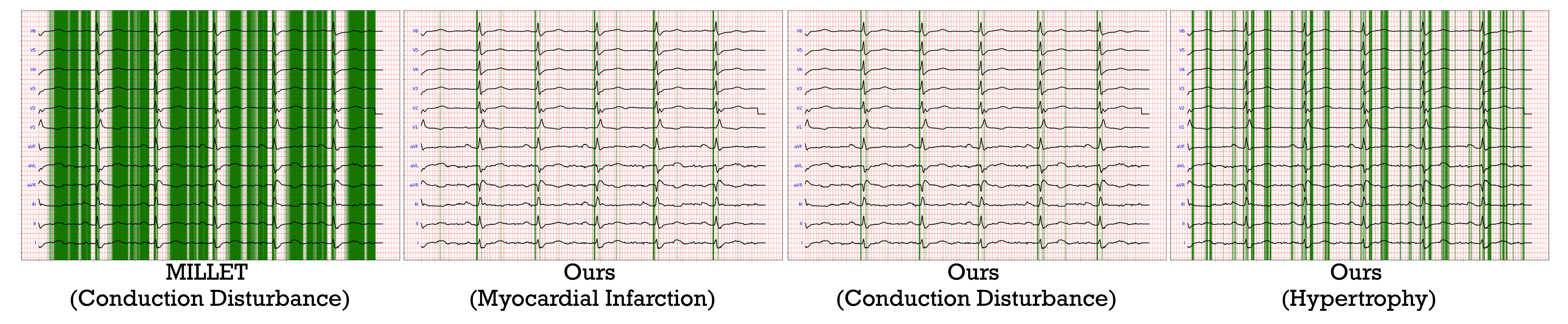}
    \caption{Comparison of MILLET and \ConMIL{} interpretations. MILLET provides interpretation only for the final prediction, whereas \ConMIL{} generates per-class interpretations, offering a more granular and transparent understanding of how different time-series segments contribute to each potential diagnosis.}
    \label{fig:compare}
\end{figure}

The effectiveness of \ConMIL{}’s QTrans-Pooling is also evident in its computational efficiency. On the PTB-XL dataset with a batch size of 8, \ConMIL{} achieves an average inference time of 13.169 ms per batch, compared to 14.723 ms for TimeMIL. This improvement is attributed to the use of cross-attention over multiple class-specific tokens rather than self-attention over all time steps, reducing the attention complexity from $\mathcal{O}(T^2d)$ to $\mathcal{O}(TKd)$

\begin{table*}
    \caption{Conformal coverage evaluation under varying confidence levels ($\alpha = 0.1, 0.05, 0.025, 0.01$). The table reports Per-Class FNR for different class labels, along with Marginal Coverage percentages that reflect how often the true label is included in the prediction set. Additionally, the Average Size of the prediction set is shown, representing the number of labels included on average.}
    \label{tab:coverage}
    \centering
    \begin{adjustbox}{max width=1.0\columnwidth}
    \begin{tabular}{c|c|cccc|ccccc|ccccc}
    \toprule
    \multicolumn{2}{c|}{Dataset}                            & \multicolumn{5}{c|}{PTBXL}                                                                                                                                             & \multicolumn{5}{c}{sleepEDF}                                                                                                \\ \midrule
    \multicolumn{2}{c|}{$\alpha$}                           & \multicolumn{1}{c|}{Before cal}            & \multicolumn{1}{c|}{0.1}            & \multicolumn{1}{c|}{0.05}          & \multicolumn{1}{c|}{0.025}           & 0.01    & \multicolumn{1}{c|}{Before cal}           & \multicolumn{1}{c|}{0.1}           & \multicolumn{1}{c|}{0.05}          & \multicolumn{1}{c|}{0.025}         & 0.01          \\ \midrule
    \multicolumn{1}{c|}{\multirow{5}{*}{Per-Class FNR}} & 1 & \multicolumn{1}{c|}{8.58$\pm$2.06} & \multicolumn{1}{c|}{10.58$\pm$0.32} & \multicolumn{1}{c|}{5.46$\pm$0.30}  & \multicolumn{1}{c|}{2.82$\pm$0.20}  & \multicolumn{1}{c|}{1.04$\pm$0.11}  & \multicolumn{1}{c|}{7.23$\pm$2.64}  & \multicolumn{1}{c|}{6.21$\pm$0.53}  & \multicolumn{1}{c|}{3.05$\pm$0.19}  & \multicolumn{1}{c|}{1.80$\pm$0.25}  & 0.75$\pm$0.25  \\ 
    \multicolumn{1}{c|}{}                               & 2 & \multicolumn{1}{c|}{28.24$\pm$1.79} & \multicolumn{1}{c|}{12.80$\pm$0.46} & \multicolumn{1}{c|}{6.31$\pm$0.65}  & \multicolumn{1}{c|}{3.23$\pm$0.42}  & \multicolumn{1}{c|}{1.21$\pm$0.26}  & \multicolumn{1}{c|}{77.69$\pm$2.22}  & \multicolumn{1}{c|}{7.86$\pm$2.34}  & \multicolumn{1}{c|}{2.69$\pm$1.17}  & \multicolumn{1}{c|}{1.29$\pm$0.59}  & 0.54$\pm$0.29  \\ 
    \multicolumn{1}{c|}{}                               & 3 & \multicolumn{1}{c|}{31.93$\pm$4.48} & \multicolumn{1}{c|}{9.59$\pm$0.41}  & \multicolumn{1}{c|}{4.87$\pm$0.33}  & \multicolumn{1}{c|}{2.81$\pm$0.26}  & \multicolumn{1}{c|}{1.09$\pm$0.17}  & \multicolumn{1}{c|}{5.89$\pm$0.65}  & \multicolumn{1}{c|}{5.58$\pm$0.61}  & \multicolumn{1}{c|}{2.57$\pm$0.35}  & \multicolumn{1}{c|}{1.41$\pm$0.28}  & 0.59$\pm$0.22  \\ 
    \multicolumn{1}{c|}{}                               & 4 & \multicolumn{1}{c|}{35.18$\pm$5.33} & \multicolumn{1}{c|}{9.27$\pm$0.50}  & \multicolumn{1}{c|}{4.20$\pm$0.27}  & \multicolumn{1}{c|}{2.06$\pm$0.32}  & \multicolumn{1}{c|}{0.67$\pm$0.12}  & \multicolumn{1}{c|}{10.09$\pm$3.70}  & \multicolumn{1}{c|}{8.32$\pm$1.31}  & \multicolumn{1}{c|}{4.61$\pm$0.73}  & \multicolumn{1}{c|}{2.16$\pm$0.49}  & 0.71$\pm$0.13  \\ 
    \multicolumn{1}{c|}{}                               & 5 & \multicolumn{1}{c|}{76.63$\pm$6.07} & \multicolumn{1}{c|}{10.92$\pm$0.91} & \multicolumn{1}{c|}{5.48$\pm$0.67}  & \multicolumn{1}{c|}{2.59$\pm$0.50}  & \multicolumn{1}{c|}{1.11$\pm$0.26}  & \multicolumn{1}{c|}{20.89$\pm$4.94}  & \multicolumn{1}{c|}{17.81$\pm$1.91} & \multicolumn{1}{c|}{10.68$\pm$0.71} & \multicolumn{1}{c|}{6.51$\pm$0.37}  & 3.53$\pm$0.69  \\ \midrule
    \multicolumn{2}{c|}{Marginal Coverage}                  & \multicolumn{1}{c|}{-} & \multicolumn{1}{c|}{89.33$\pm$0.27} & \multicolumn{1}{c|}{94.65$\pm$0.30} & \multicolumn{1}{c|}{97.24$\pm$0.16} & \multicolumn{1}{c|}{98.98$\pm$0.08} & \multicolumn{1}{c|}{-} & \multicolumn{1}{c|}{91.58$\pm$0.53} & \multicolumn{1}{c|}{95.59$\pm$0.23} & \multicolumn{1}{c|}{97.49$\pm$0.11} & 98.84$\pm$0.15 \\ \midrule
    \multicolumn{2}{c|}{Avg Size}                           &\multicolumn{1}{c|}{1} & \multicolumn{1}{c|}{2.09$\pm$0.02}  & \multicolumn{1}{c|}{2.77$\pm$0.05}  & \multicolumn{1}{c|}{3.33$\pm$0.07}  & \multicolumn{1}{c|}{3.92$\pm$0.06}  & \multicolumn{1}{c|}{1}  & \multicolumn{1}{c|}{1.39$\pm$0.07} & \multicolumn{1}{c|}{1.64$\pm$0.11}  & \multicolumn{1}{c|}{1.87$\pm$0.14}  & 2.18$\pm$0.18  \\ 
    \bottomrule    
\end{tabular}
\end{adjustbox}
\end{table*}

We further validate \ConMIL{}’s ability to produce reliable, uncertainty-aware predictions through conformal prediction analysis. As shown in Table~\ref{tab:coverage}, \ConMIL{} maintains marginal coverage levels that closely match the specified confidence thresholds (e.g., $\alpha = 0.1, 0.05, 0.025, 0.01$), while keeping the average prediction set size within practical bounds. This ensures that true labels are captured with high probability without overwhelming the diagnostic process with excessive candidate labels.

\begin{figure}
\centering
\includegraphics[width=0.95\textwidth]{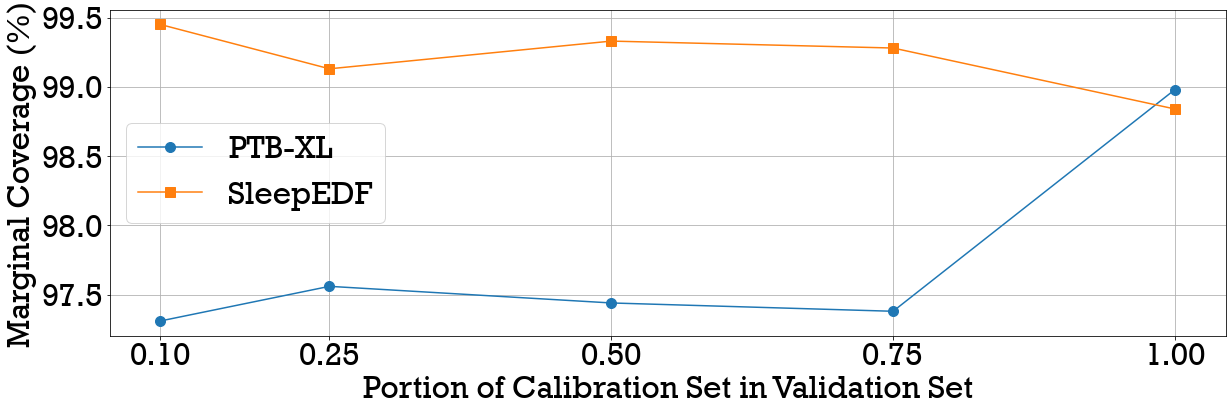}
\caption{Marginal Coverage ($\alpha=0.01$) vs. Portion of Calibration Set in Validation Set. The plot illustrates the relationship between the proportion of the calibration set included in the validation set and the marginal coverage (\%) for PTB-XL and Sleep datasets. The results show that marginal coverage remains relatively stable across different calibration set fractions, with minor fluctuations.}
\label{fig:cal_frac}
\end{figure}

To assess robustness, we examine how marginal coverage varies with different calibration set sizes. As illustrated in Figure~\ref{fig:cal_frac}, \ConMIL{} achieves consistent coverage across a range of calibration set proportions, indicating that its conformal thresholds are stable and reliable even under limited calibration resources.

\textcolor{black}{In summary, while \ConMIL{} demonstrates competitive performance as an SSM (see Table \ref{tab:milnet-performance}), our results also highlight the stagnation in the current development of SSMs for physiological signals. Accordingly, the primary contribution of this work is not simply in surpassing existing SSM accuracy benchmarks, but in introducing a novel architecture that offers per-class interpretability and calibrated uncertainty quantification.}

\section{Supplementary Materials: Example of prompts and MiMo-VL-7B's response}
\label{app:mimo_example}
\textcolor{black}{To further illustrate the practical impact of our framework, this section serves a dual purpose. First, it provides a qualitative case study analyzing the reasoning process of the MiMo-VL-7B model on a challenging sleep stage classification task, contrasting its performance when operating as a stand-alone system versus when it is augmented by \ConMIL's full support. Second, it showcases a structured prompting methodology, as shown in Figure \ref{fig:mimo_example}, designed to elicit step-by-step reasoning from the latest generation of reasoning-centric Vision Language Models (VLMs), demonstrating how to effectively query these models for transparent and verifiable thought processes.}

\subsection{Case Study: EEG Sleep Stage Classification}
\paragraph{1. MiMo-VL-7B Stand-Alone Performance}
\textcolor{black}{When prompted to act as a "highly skilled sleep medicine expert AI" and analyze the raw EEG waveform, MiMo-VL-7B initiated a detailed visual analysis. As shown in its reasoning $\left< \text{think} \right>$ block, the model correctly identified the presence of "relatively low-amplitude" and "high-frequency activity" throughout the epoch. However, it misinterpreted these features, associating them with the alpha/beta range typical of wakefulness. It systematically, but incorrectly, ruled out other sleep stages due to the perceived absence of clear markers like sleep spindles, K-complexes, or distinct delta waves. This led the model to an incorrect final conclusion:
\begin{itemize}
    \item Final Answer (Stand-Alone): Awake
\end{itemize}
This outcome demonstrates a key limitation of even powerful stand-alone LLMs: while capable of identifying broad signal patterns, they can overlook localized, nuanced, and transient features that are critical for an accurate diagnosis, leading them to misclassify the signal based on more generalized characteristics.
}

\paragraph{2. MiMo-VL-7B Performance with ConMIL Support:}
\textcolor{black}{The interaction changed fundamentally when the model was augmented by ConMIL. The prompt was more structured, providing the LLM with:
\begin{itemize}
    \item A constrained, high-confidence prediction set from conformal prediction: \{N1, N2\}.
    \item Per-class interpretability heatmaps highlighting the signal regions that \ConMIL{} identified as most salient for the N1 and N2 stages, respectively.
\end{itemize}
This transformed the task from an open-ended visual search into a focused, evidence-based decision-making process. The LLM's reasoning now systematically evaluated the provided evidence for each hypothesis:
\begin{itemize}
    \item Analysis of N1 Evidence: The model noted that the N1 interpretation heatmap highlighted relevant regions but also observed that the underlying EEG showed features more typical of N2 sleep.
    \item Analysis of N2 Evidence: Crucially, the model observed that the N2 heatmap successfully guided its attention to the "presence of sleep spindles and K-complexes," the very features it had previously missed. It correctly noted that these features, combined with a decrease in alpha activity, strongly aligned with an N2 classification.
\end{itemize}
By comparing the evidence for both options, the model was able to correct its initial assessment and arrive at the correct diagnosis:
\begin{itemize}
    \item Final Answer (with ConMIL Support): N2
\end{itemize}
}

\begin{figure}
\centering
\includegraphics[width=0.8\textwidth]{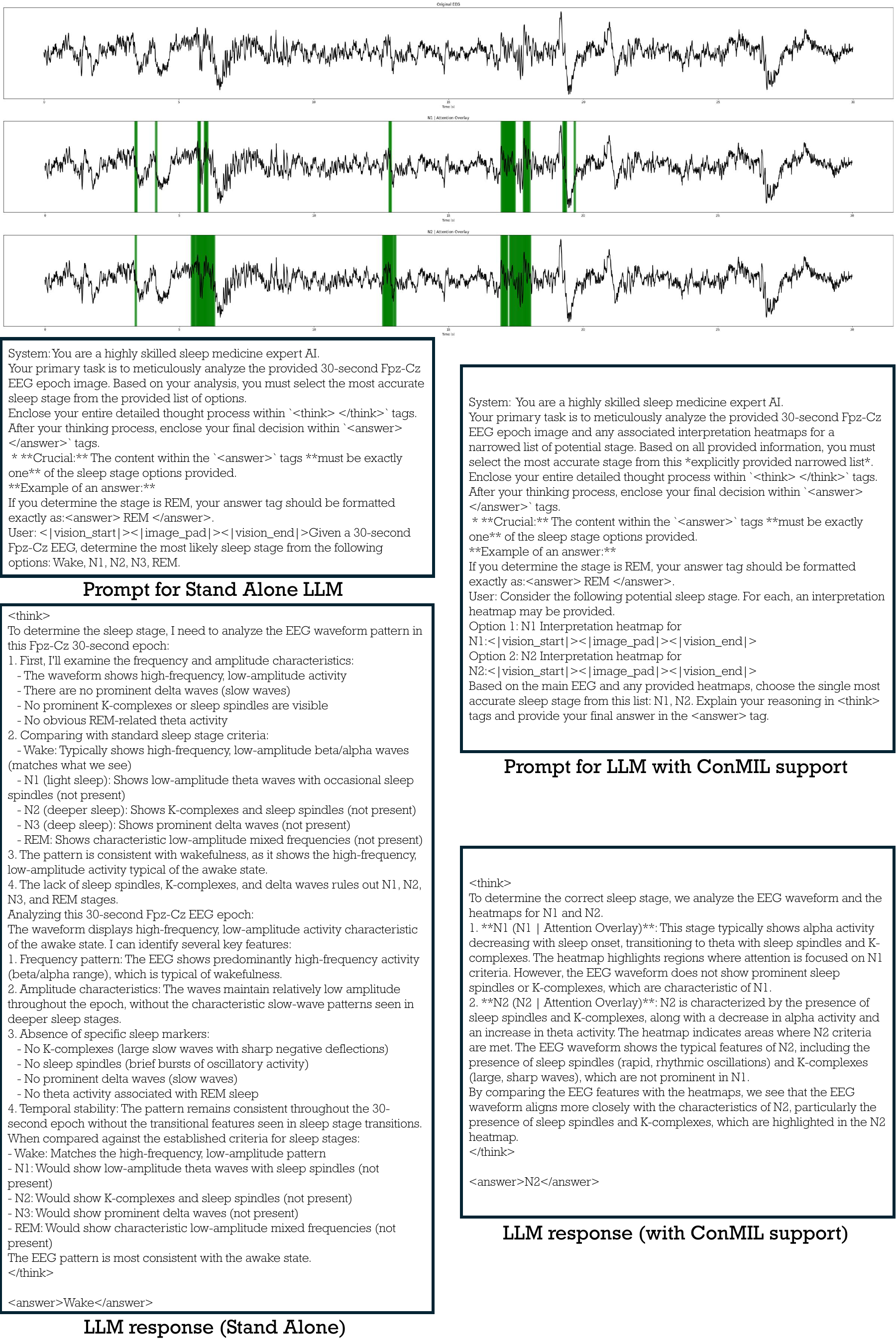}
\caption{\textcolor{black}{Case study comparing MiMo-VL-7B's reasoning process with and without \ConMIL{} support. This figure contrasts the model's reasoning on an EEG sleep stage classification task under two conditions: stand-alone operation (left) and with full support from \ConMIL{} (right).}}
\label{fig:mimo_example}
\end{figure}

\section{Supplementary Materials:``Interviews'' for Qwen2-VL-7B and MiMo-VL-7B}
\label{app:more_interviews}

\textcolor{black}{We provide extra ``interviews'' on Qwen2-VL-7B and MiMo-VL-7B in Figure~\ref{fig:interview-qwen} and \ref{fig:interview-mimo}. These qualitative case studies further explore how \ConMIL's support influences the reasoning processes of different LLM when faced with a challenging 12-lead ECG diagnosis.}

\paragraph{Analysis of Qwen2-VL-7B (Figure~\ref{fig:interview-qwen})}
\textcolor{black}{The interview with Qwen2-VL-7B demonstrates how \ConMIL's support can successfully guide a model to the correct diagnosis even when its initial assessment is flawed and it is presented with confusing instructions. Initially, Qwen2-VL-7B analyzes the ECG's features and, failing to identify subtle abnormalities, concludes the ECG is "largely normal". This represents an incorrect baseline diagnosis. When provided with heatmaps for "ST/T Change" and "Myocardial Infarction," the model's reasoning shows some confusion, initially stating the heatmap shows "minimal, inconsistent changes" that are "more consistent with a normal ECG pattern". However, despite this hesitant reasoning, the visual evidence provided by the heatmap successfully guides it to select "Option 1: ST/T Change" as its final answer. In this case, \ConMIL's support was instrumental in correcting the model's initial error. The most telling part of the interaction occurs next. The model is given a flawed prompt stating that its answer of "ST/T Change" is incorrect and that "Myocardial Infarction" is the correct choice. Instead of simply accepting this correction, the model re-evaluates the visual evidence. It correctly reasons that the "heatmap for 'ST/T Change' shows clear evidence of subtle ST/T wave changes" and concludes by re-asserting its correct diagnosis: "Therefore, the correct diagnosis based on the provided information is indeed 'ST/T Change.'"}

\paragraph{Analysis of MiMo-VL-7B-RL (Figure~\ref{fig:interview-mimo})}
\textcolor{black}{The interview with MiMo-VL-7B-RL demonstrates how \ConMIL{} can help a capable model refine a reasonable but general diagnosis into a more specific and accurate one. In its initial analysis, MiMo-VL-7B correctly identifies the most salient pathological feature in the ECG, stating there is "ST segment depression in the inferior leads (II, III, and aVF)". Based on this correct observation, it makes a logical but broad diagnosis of "ST/T Change," noting this could represent ischemia. While a reasonable inference, it is less specific than the ground truth of Myocardial Infarction. When prompted to rethink its choice using heatmaps for "Myocardial Infarction" and "Conduction Disturbance," the model successfully uses the visual evidence to increase its diagnostic precision. It observes that the heatmap for Myocardial Infarction highlights the same inferior leads where it had already identified the ST depression, noting this "pattern typically correlates with ST segment changes in the inferior leads, which was observed in the original ECG". The model correctly deduces that the MI heatmap "appears to more accurately reflect this pattern of disease" compared to the more diffuse Conduction Disturbance heatmap. This allows it to confidently update its diagnosis to the correct ground truth of "Myocardial Infarction".}

\begin{figure}
    \centering
    \includegraphics[width=0.95\linewidth]{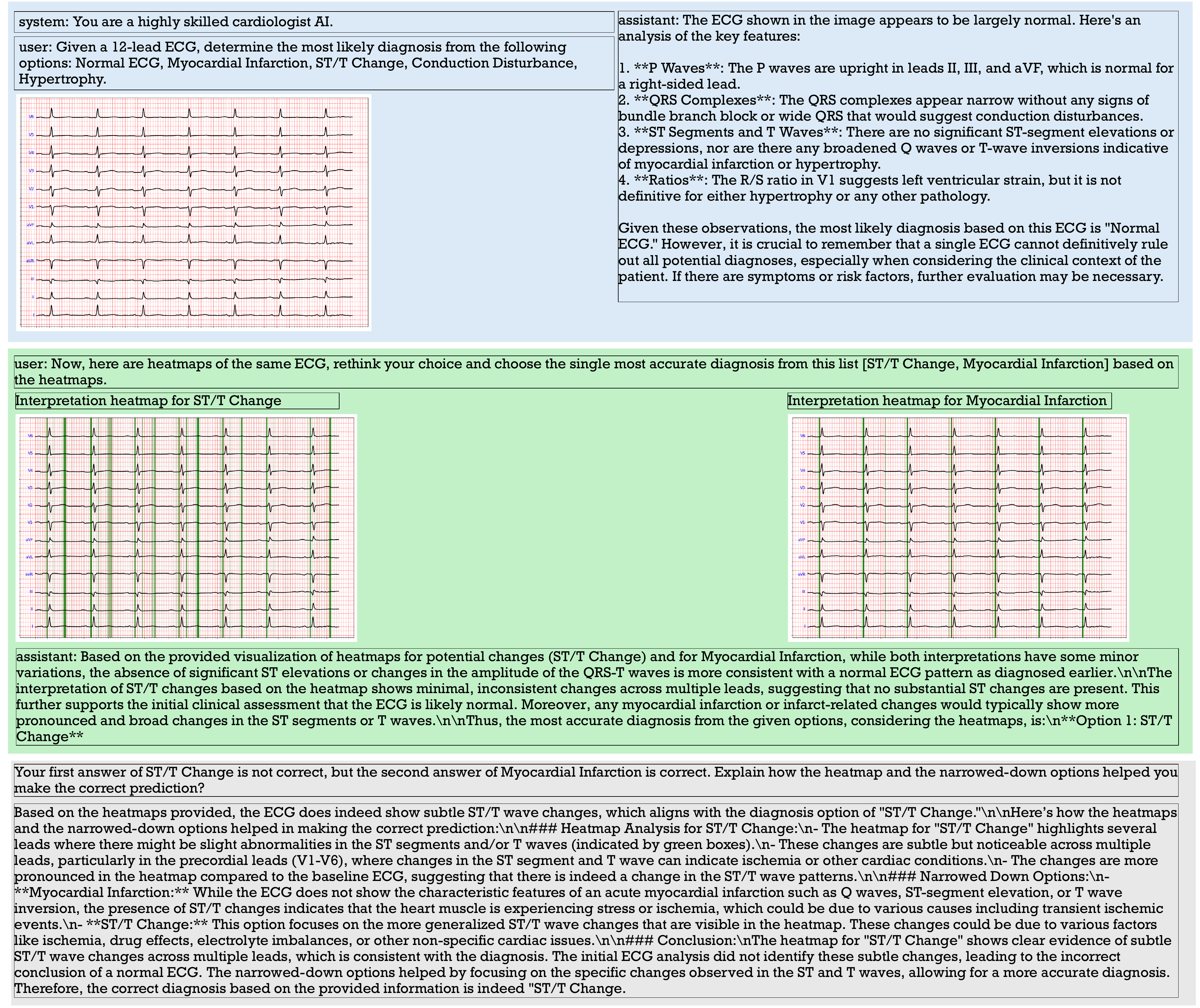}
    \caption{\textcolor{black}{This case study demonstrates how \ConMIL{} can guide Qwen2-VL-7B to a correct diagnosis despite an initial error. In its stand-alone assessment, the model incorrectly identifies the ECG as "Normal ECG". When provided with \ConMIL's interpretability heatmaps, the model is successfully guided to the correct diagnosis of "ST/T Change". Notably, when presented with a subsequent, flawed user prompt stating its correct answer was wrong, the model re-evaluates the visual evidence and correctly re-asserts its diagnosis of "ST/T Change," trusting the heatmap over the misleading instruction. This highlights \ConMIL's crucial role in error correction and showcases the VLM's ability to ground its final decision in the provided visual evidence.}}
    \label{fig:interview-qwen}
\end{figure}

\begin{figure}
    \centering
    \includegraphics[width=0.95\linewidth]{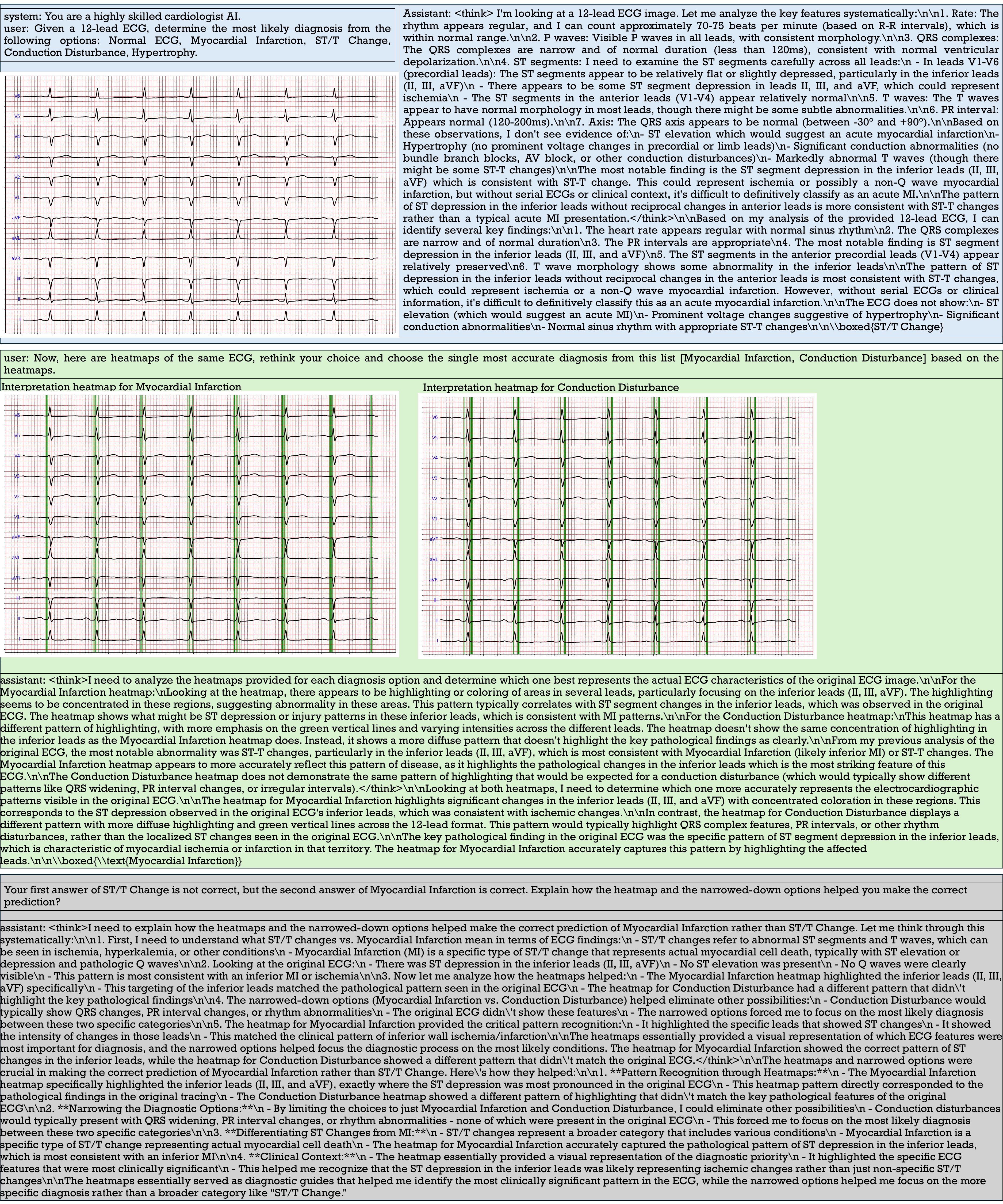}
    \caption{\textcolor{black}{MiMo-VL-7B-RL initially makes a reasonable but broad diagnosis of "ST/T Change" after correctly identifying "ST segment depression in the inferior leads". When provided with \ConMIL's interpretability heatmaps, it recognizes that the heatmap for "Myocardial Infarction" accurately highlights the pathological pattern it observed in the inferior leads. This support allows the model to successfully refine its answer to the more specific and correct diagnosis of "Myocardial Infarction," demonstrating how \ConMIL{} enhances the precision of a capable VLM.}}
    \label{fig:interview-mimo}
\end{figure}

\end{document}